\documentclass{article}

\usepackage[preprint]{neurips_2026}

\usepackage[utf8]{inputenc} 
\usepackage[T1]{fontenc}    
\usepackage{hyperref}       
\usepackage{url}            
\usepackage{booktabs}       
\usepackage{amsfonts}       
\usepackage{nicefrac}       
\usepackage{microtype}      
\usepackage{xcolor}         
\usepackage{amsmath}
\usepackage{amssymb}
\usepackage{amsthm}
\usepackage{algorithm}
\usepackage{algpseudocode}
\usepackage{graphicx}
\usepackage{wrapfig}
\usepackage{multicol}
\usepackage{multirow}
\usepackage{subcaption}
\usepackage{float}

\newcommand{\cc}[1]{\mathcal{#1}}
\newcommand{\bb}[1]{\mathbb{#1}}

\newcommand{\mbf}[1]{\mathbf{#1}}

\newtheorem{theorem}{Theorem}
\newtheorem{corollary}{Corollary}[theorem]
\newtheorem{lemma}{Lemma}
\newtheorem{assumption}{Assumption}

\title{Adaptive Policy Learning Under Unknown Network Interference}

\author{%
  Aidan Gleich \\
  Department of Statistical Science\\
  Duke University
  \texttt{aidan.gleich@duke.edu} \\
  \And
  Eric Laber \\
  Department of Statistical Science\\
  Duke University
  \texttt{eric.laber@duke.edu} \\
  \AND
  Alexander Volfovsky \\
  Department of Statistical Science\\
  Duke University
  \texttt{alexander.volfovsky@duke.edu}   
}

\begin{document}

\maketitle

\begin{abstract}
Adaptive experimentation under unknown network interference requires solving two coupled problems: (i) learning the underlying dynamics of interference among units and (ii) using these dynamics to inform treatment allocation in order to maximize a cumulative outcome of interest (e.g. revenue). Existing adaptive experimentation methods either assume the interference network is fully known or bypass the network by operating on coarse cluster-level randomizations. We develop a Thompson sampling algorithm that jointly learns the interference network and adaptively optimizes individual-level treatment allocations via a Gibbs sampler. The algorithm returns both an optimized treatment policy and an estimate of the interference network; the latter supports downstream causal analyses such as estimation of direct, indirect, and total treatment effects. For additive spillover models, we show that total reward is linear in the treatment vector with coefficients given by an $n$-dimensional latent score. We prove a Bayesian regret bound of order $\sqrt{nT \cdot B \log(en/B)}$ for exact posterior sampling; empirically, our Gibbs-based approximate sampler achieves regret consistent with this rate and remains sublinear when the additive spillovers assumption is violated. For general Neighborhood Interference, where this reduction is unavailable, we analyze an explore-then-commit variant with $O(n^2 \log T)$ graph-discovery cost. An information-theoretic $\Omega(n \log T)$ lower bound complements both results. Empirically, our method achieves more than an order-of-magnitude reduction in regret in head-to-head comparisons. On two real-world networks, the algorithm achieves sublinear regret and yields downstream effect estimates with small RMSE relative to the truth.
\end{abstract}

\section{Introduction}

\begin{figure}[ht]
\centering
\includegraphics[width=0.98\textwidth]{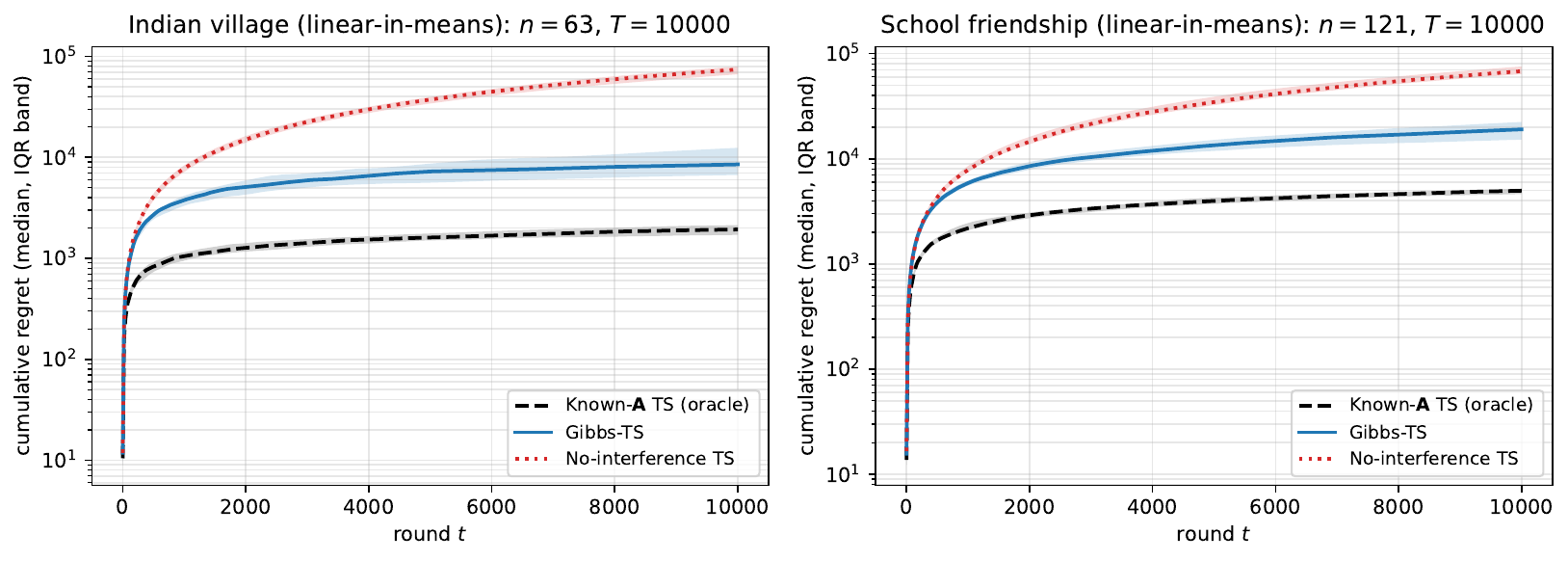}
\caption{Log cumulative regret on two real-world networks: village~10 borrowing-money household network from \citet{banerjee_diffusion2013} ($n=63$, left) and school SCHID~3 friendship network from \citet{paluck_shepherd_aranow2016} ($n=121$, right). Gibbs-TS achieves sublinear regret on both networks at scales where prior adaptive methods fail to run; the gap to the Known-$\mathbf{A}$ oracle reflects the cost of network learning. Ignoring interference (no-interference TS) yields linear regret. }
\label{fig:real_data}
\end{figure}

In many experimental settings, the outcome of one unit depends on the treatments assigned to others. Vaccination reduces disease transmission to untreated contacts \citep{laber2018,su2019, hudgens_holloran}, discounts on an online marketplace affect the revenue of competing goods \citep{wager_xu}, and educational interventions propagate through peer networks. This phenomenon, known as interference, complicates static and sequential experimental design: the optimal treatment for any given unit depends on which of its neighbors are treated.

The causal inference literature has developed a substantial apparatus for experimentation under interference using static experiments or observational data.  \citep{sussman2017, aronow_samii, belloni, ugander_gcr, EcklesKarrerUgander, jagadeesan2020designs}.  Yet in many of the applications that motivate this literature, treatments are deployed sequentially and the experimenter can update allocations as outcomes accrue. A platform running an A/B test observes daily engagement and can reallocate treatment across users; a public health agency rolling out a vaccine can adjust coverage geographically as uptake data arrives.

 Methods for the sequential problem have been proposed but are limited by scale or context. \citet{agarwal2024} propose a method for online experimentation under general neighborhood interference but rely on Fourier analysis over all possible neighbor configurations, producing a parameter count that is exponential in neighborhood size; their method does not scale beyond roughly a dozen units. \citet{viviano_rudder2020} propose an adaptive experimental design for optimal policy estimation under unknown interference but require a predefined cluster structure and do not recover the interference network itself. \citet{gleich2026scalable} develop a scalable Thompson sampling on networks of hundreds of nodes but assume the network is known. None of these methods jointly learn the  network and optimize individual-level treatment allocations in a sequential setting.

The learned interference network is itself of scientific interest: it supports estimation of direct and indirect effects \citep{on_causal_interference, toulis13}, design of follow-up experiments \citep{jagadeesan2020designs, aronow_samii, basse2018}, and identification of the most influential units. 
In many applications, the network is as valuable as the treatment policy, yet no existing adaptive method explicitly recovers it.

 We propose an adaptive experimental design that jointly learns the interference network and adaptively optimizes in a sequential experiment. Our contributions are as follows.

\begin{enumerate}
    \item \textbf{A framework for adaptive policy learning under unknown interference.} We formulate the joint problem of network recovery and adaptive treatment allocation under the standard Neighborhood Interference Assumption (NIA), and develop a Thompson sampling algorithm that solves it. Joint posterior samples over the network and reward parameters are drawn by edge-wise Gibbs sweeps.
    The algorithm returns both an optimized treatment policy and an estimate of the interference network.
    \item \textbf{Downstream causal inference from the recovered network.} The estimated $\hat{\mathbf{A}}$ supports post-experiment estimation of direct, indirect, and total treatment effects, design of follow-up randomized experiments, and identification of highly connected units. We empirically validate this in Section~\ref{sec:exp_downstream}, showing that effect estimates obtained under $\hat{\mathbf{A}}$ closely track those obtained under the true network.
    \item \textbf{Regret bounds and graph-discovery limits.} For additive spillover models, we show that total reward is linear in the treatment vector with $n$-dimensional latent score coefficients, and prove a Bayesian regret bound of order $\sqrt{nT \cdot B \log(en/B)}$ for exact posterior sampling. For general Neighborhood Interference rewards, we analyze an explore-then-commit variant with $O(n^2 \log T)$ graph-discovery cost. An information-theoretic $\Omega(n \log T)$ lower bound applies to any consistent algorithm under general NIA.
    \item \textbf{Empirical results at scale.}
    On two real-world networks, the algorithm achieves sublinear regret (Figure~\ref{fig:real_data}) under the linear-in-means specification \citep{BRAMOULLE200941, annurev_survey}, and the recovered network supports downstream estimation of direct, indirect, and total treatment effects. On small networks where \citet{agarwal2024} is feasible, our algorithm achieves regret an order of magnitude lower.  Under general interference, our method scales to networks $3\times$ the size handled by existing methods and scales up to $n=1000$ under linear-in-means rewards.
\end{enumerate}

The remainder of the paper is organized as follows. Section~\ref{sec:related} situates our work within the causal inference and adaptive experimentation literatures. Section~\ref{sec:model} introduces the sequential experimental setup and the reward model implied by the Neighborhood Interference Assumption. Section~\ref{sec:gibbs_ts} develops the Thompson sampling algorithm and its edge-wise Gibbs sampler. Section~\ref{sec:theory} presents the regret analysis of the explore-then-commit variant and the lower bound. Section~\ref{sec:experiments} reports empirical results, and Section~\ref{sec:discussion} concludes.

\section{Background and Related Work}
\label{sec:related}

\paragraph{Interference and experimental design.}
A fundamental challenge in causal inference on networks is that the outcome of one individual may depend on the treatment assigned to others, violating the stable unit treatment value assumption \citep{rubin1978}. This phenomenon, known as interference or spillover, arises in settings ranging from vaccine trials \citep{hudgens_holloran} to online marketplaces \citep{wager_xu}. A substantial literature addresses interference in the design and analysis of experiments, typically by restricting its structure through assumptions such as partial interference \citep{hudgens_holloran}, neighborhood interference \citep{sussman2017, belloni}, and exposure mappings \citep{aronow_samii}. Under these assumptions, researchers have developed methods for estimating causal effects \citep{on_causal_interference, toulis13, aronow_samii}, designing randomized experiments \citep{hudgens_holloran, EcklesKarrerUgander, ugander_gcr}, and conducting inference on peer influence parameters \citep{BRAMOULLE200941, annurev_survey}. This work is almost exclusively in static settings: treatments are assigned once according to a fixed design, and the goal is estimation or inference after the experiment concludes.

\paragraph{Learning interference structure.}
The question of whether and how individuals interfere with one another is often of independent scientific interest. In the peer effects literature, the linear-in-means model \citep{manski_ident, BRAMOULLE200941} parameterizes interference as a function of the average treatment among neighbors, and estimation of the peer effect parameter has been widely studied \citep{annurev_survey}. More generally, identifying the interference network from experimental data enables downstream causal tasks: estimating direct and indirect treatment effects, designing future experiments with appropriate clustering \citep{ugander_gcr, EcklesKarrerUgander}, and understanding the mechanisms through which treatments propagate. Our work contributes to this line of research by jointly learning the interference network and optimizing treatment allocations in a sequential setting.

\paragraph{Adaptive experimentation under interference.}
In many applications, treatments must be assigned sequentially and the experimenter can adapt allocations based on observed outcomes. The multi-armed bandit framework provides a natural formulation for such problems \citep{Lattimore_Szepesvári_2020}, but standard algorithms assume that the outcome of each unit depends only on its own treatment. \citet{jia2024multiarmedbanditsinterference} study adaptive experimentation on spatial grids with decaying interference, comparing against a restricted class of constant policies. \citet{agarwal2024} consider arbitrary interference within neighborhoods and develop algorithms based on discrete Fourier analysis, but their methods scale to at most $n \approx 12$ units. \citet{viviano} studies optimal policy targeting under network interference in an offline setting where the network of the target population may be unobserved. \citet{viviano_rudder2020} propose an adaptive experimental 
design for policy estimation under unknown interference using cluster-level perturbations. Our work differs in that we operate in a fully sequential setting, learn the interference network explicitly, and optimize individual-level treatment allocations without requiring cluster structure assumptions. \citet{gleich2026scalable} show that adopting an assumption on the 
additivity of direct and indirect effects
enables scalable Thompson sampling on networks with hundreds of nodes, but assume the interference network is known.

\section{Model}\label{sec:model}

\paragraph{Setup.} 
We consider sequential allocation of binary treatments to a fixed population of $n$ units over discrete time periods $t \in \{1,\ldots,T\}$. The units are connected by an \textbf{unknown} undirected network represented by the adjacency matrix $\mbf{A} \in \{0,1\}^{n\times n}$. 

Adopting the potential outcomes framework \citep{rubin1978}, we let $Y_i(\mathbf{Z}, \mathbf{A})$ 
denote the potential outcome for unit $i$ under treatment vector 
$\mathbf{Z} \in \{0,1\}^n$ and interference network $\mathbf{A}$. At each time period $t$, the experimenter chooses a treatment assignment 
$\mathbf{Z}_t \in \{0,1\}^n$ subject to a budget constraint 
$\|\mathbf{Z}_t\|_1 \leq B$, then observes noisy outcomes 
$Y_{t,i} = Y_i(\mathbf{Z}_t, \mathbf{A}) + \varepsilon_{t,i}$ for all $n$ 
units. The experimenter's goal is to maximize cumulative outcomes 
$\sum_{t=1}^T \sum_{i=1}^n Y_{t,i}$.

\paragraph{Outcome model.}
We assume that a unit's outcome depends only on its own treatment status 
and the treatment status of its neighbors as defined by $\mathbf{A}$. 
This is the Neighborhood Interference Assumption \citep{sussman2017} and 
is ubiquitous in the causal inference literature on network experiments 
\citep{toulis13, EcklesKarrerUgander, belloni} as well as in existing 
work on adaptive experimentation under network interference 
\citep{agarwal2024, gleich2026scalable}. Denote the set of unit $i$'s 
neighbors by $\mathcal{N}_i = \{j : A_{ij} = 1\}$.

\begin{assumption}
The outcome function of unit $i$ satisfies the Neighborhood Interference 
Assumption if for all treatment assignments $\mathbf{Z}_t, \mathbf{Z}_t'$ 
that agree on units $\mathcal{N}_i \cup \{i\}$, 
$Y_i(\mathbf{Z}_t, \mathbf{A}) = Y_i(\mathbf{Z}_t', \mathbf{A})$.
\end{assumption}

Under NIA, the expected potential outcome takes a parametric form that we denote $r_i(\mbf{Z}_t;\mbf{A}, \boldsymbol{\theta}) = \mathbb{E}[Y_i(\mbf{Z}_t, \mbf{A})].$ Crucially, the vector of unit-level outcome 
functions becomes linear in $\boldsymbol{\theta}$:
\[\mathbf{r}(\mathbf{Z}_t; \mathbf{A}, \boldsymbol{\theta}) = 
\mathbf{H}(\mathbf{Z}_t; \mathbf{A})\boldsymbol{\theta}.\]
Linearity is not an assumption; the form follows directly from NIA \citep{sussman2017}.

Under NIA, the dimension of $\boldsymbol{\theta}$ depends on the chosen parameterization of the reward function. Our algorithmic framework accommodates any specification that is linear in $\boldsymbol{\theta}$; that is, any specification that assumes NIA. This matches the generality of \citet{agarwal2024}. While this allows for a rich class of reward functions, generality comes at significant cost: the reward function of a node with degree $d$ has on the order of $2^{d+1}$ unknown parameters. Throughout, $\boldsymbol{\theta} \in \mathbb{R}^D$ is fixed-dimensional under a chosen reward parameterization; the graph $\mbf{A}$ enters the model only through the design matrix $\mathbf{H}(\mathbf{Z}_t; \mathbf{A})$, not through the dimension or indexing of $\boldsymbol{\theta}$. Learning quickly becomes infeasible for even moderate $n$, as seen in \citet{agarwal2024}; their method fails to scale beyond $n=12$. We provide experiments under several reward  specifications in Section~\ref{sec:experiments}, ranging from pure NIA as in \citet{agarwal2024} to the linear-in-means model of \citet{manski_ident}.

\paragraph{Maximization and Regret.}
The experimenter seeks to maximize cumulative node-level outcomes via the sequence of optimal treatment assignment vectors
\[\mbf{Z}_t^* = \arg \max_{\mbf{Z}:\|\mbf{Z}\|_1\leq B} \sum_{i=1}^nr_i(\mbf{Z};\mbf{A},\boldsymbol{\theta}):= \arg \max_{\mbf{Z}:\|\mbf{Z}\|_1\leq B} f_{\mbf{A}, \boldsymbol{\theta}}(\mbf{Z}), \]
where $\sum_{i=1}^nr_i(\mbf{Z};\mbf{A},\boldsymbol{\theta}) = f_{\mbf{A}, \boldsymbol{\theta}}(\mbf{Z}) $. We measure performance using regret, the expected difference between rewards under optimal decision making and the experimenter's policy:
\[\text{Reg}_T = \sum_{t=1}^T \sum_{i=1}^n \mathbb{E}\left[r_i(\mbf{Z}^*_t;\mbf{A}, \boldsymbol{\theta}) - r_i(\mbf{Z}_t;\mbf{A}, \boldsymbol{\theta})\right].\]
Expectations are taken both with respect to the noise $\boldsymbol{\epsilon}$ and any stochasticity in the experimenter's policy. For our Thompson Sampling-style algorithms, we consider Bayesian regret, which additionally integrates over the prior distributions on $\boldsymbol{\theta}$ and $\mbf{A}$.

Because both $\mbf{A}$ and $\boldsymbol{\theta}$ are unknown, the experimenter faces a joint learning problem in addition to the classic exploration-exploitation tradeoff: learning $\boldsymbol{\theta}$ requires knowledge of $\mbf{A}$ and vice-versa. 

\section{Thompson Sampling With a Joint Gibbs Sampler}
\label{sec:gibbs_ts}

In prior work on Thompson sampling under network interference \citep{gleich2026scalable}, the adjacency matrix $\mbf{A}$ is assumed to be observed at each round, so that the experimenter maintains a posterior only over $\boldsymbol{\theta}$. In many applications this assumption is unrealistic: the experimenter may suspect that social or behavioral ties produce interference without having a reliable roster of who interacts with whom. In this section we extend the Thompson sampling framework to the setting in which $\mbf{A}$ is unknown and must be learned jointly with $\boldsymbol{\theta}$ from the reward history.

\subsection{A Joint Prior and Gibbs Construction}\label{sec:gibbs_sampler}

To place the problem in a Bayesian framework, we specify a prior over $(\boldsymbol{\theta}, \mbf{A})$. Motivated by normal--normal conjugacy, we use a Gaussian prior $\boldsymbol{\theta} \sim \mathcal{N}(\boldsymbol{\mu}_0, \boldsymbol{\Sigma}_0)$, which is standard in Thompson sampling for linear bandits and is effectively equivalent to ridge regularization in the posterior mean. For the adjacency matrix we adopt an independent Bernoulli prior,
\[
A_{ij} \stackrel{\text{iid}}{\sim} \text{Bern}(\rho), \qquad 1 \leq i < j \leq n, \qquad A_{ij} = A_{ji}, \qquad A_{ii} = 0,
\]
where $\rho \in (0,1)$ is an edge-density hyperparameter. Denoting the history at the start of round $t$ by $\mathcal{D}_{t-1} = \{(\mbf{Z}_s, \mbf{r}_s)\}_{s=1}^{t-1}$, the joint posterior
\[
\pi(\boldsymbol{\theta}, \mbf{A} \mid \mathcal{D}_{t-1}) \propto \pi_0(\boldsymbol{\theta})\, \pi_0(\mbf{A}) \prod_{s=1}^{t-1} \mathcal{N}\!\left(\mbf{r}_s ; \mbf{H}(\mbf{Z}_s;\mbf{A})\boldsymbol{\theta}, \sigma^2 \mbf{I}_n\right)
\]
has no closed form because $\mbf{A}$ enters the likelihood nonlinearly through the neighbor counts $d^1_{s,i}(\mbf{A})$.  We therefore draw approximate joint samples via Gibbs sweeps that alternate between two full conditionals. Note that we use $\mbf{r}_s, r_{s,i}$ to denote the noisy rewards, abusing notation from Section~\ref{sec:model}.

\paragraph{(i) Parameter update: $\boldsymbol{\theta} \mid \mbf{A}, \mathcal{D}_{t-1}$.}
With $\mbf{A}$ fixed, the model reduces to a standard conjugate Gaussian regression. Stacking the rows of $\mbf{H}(\mbf{Z}_s;\mbf{A})$ for $s = 1, \ldots, t-1$ into a single design matrix $\mbf{H}_{t-1}(\mbf{A}) \in \mathbb{R}^{n(t-1)\times D}$, and letting $\mbf{r}_{1:t-1}$ denote the corresponding stacked reward vector, the full conditional is Gaussian:
\begin{align}
\boldsymbol{\Sigma}(\mbf{A}) &= \left(\mbf{H}_{t-1}(\mbf{A})^\top \mbf{H}_{t-1}(\mbf{A})/\sigma^2 + \boldsymbol{\Sigma}_0^{-1}\right)^{-1}, \label{eq:theta_cov} \\
\boldsymbol{\mu}(\mbf{A}) &= \boldsymbol{\Sigma}(\mbf{A})\left(\mbf{H}_{t-1}(\mbf{A})^\top \mbf{r}_{1:t-1}/\sigma^2 + \boldsymbol{\Sigma}_0^{-1}\boldsymbol{\mu}_0\right). \label{eq:theta_mean}
\end{align}

\paragraph{(ii) Edge update: $\mbf{A} \mid \boldsymbol{\theta}, \mathcal{D}_{t-1}$.}
With $\boldsymbol{\theta}$ fixed, we sweep over all edges $(i,j)$ with $i < j$, updating each edge from its Gibbs full conditional given all other edges and the data. Under NIA, the reward of node $\ell$ depends on $A_{ij}$ only if $\ell \in \{i,j\}$, so the full conditional for edge $(i,j)$ involves only the outcome streams at its two endpoints:
\begin{align}\label{eq:edge_gibbs}
&\operatorname{logit}\, P(A_{ij}=1 \mid \mbf{A}_{-(ij)}, \boldsymbol{\theta}, \mathcal{D}_{t-1})
= \log\frac{\rho}{1-\rho} \nonumber\\
&\quad + \frac{1}{2\sigma^2}\sum_{s=1}^{t-1} \sum_{\ell \in \{i,j\}} \left[
\left(r_{s,\ell} - \eta^{(0)}_{s,\ell}\right)^2 - \left(r_{s,\ell} - \eta^{(1)}_{s,\ell}\right)^2
\right],
\end{align}
where, for $b \in \{0,1\}$ and $\ell \in \{i,j\}$,
$\eta^{(b)}_{s,\ell} = h_\ell\!\left(\mbf{Z}_s; \mbf{A}^{(b)}_{\ell,\cdot}\right)^\top \boldsymbol{\theta}
$
is the predicted reward for node $\ell$ at round $s$ under the hypothesis $A_{ij} = b$, with all other edges held at their current values. Here $h_\ell(\cdot;\cdot)$ denotes the row of the design matrix $\mbf{H}$ corresponding to node $\ell$, determined by the chosen NIA-compatible parameterization, and $\mbf{A}^{(b)}_{\ell,\cdot}$ denotes row $\ell$ of the current adjacency matrix with the $(i,j)$ entry set to $b$. We sample $A_{ij} \sim \text{Bern}(p)$ where $p$ is the probability implied by Equation~\eqref{eq:edge_gibbs}, and set $A_{ji} = A_{ij}$.

Running $K$ Gibbs sweeps (alternating steps (i) and (ii)) at round $t$ yields an approximate joint sample $(\boldsymbol{\theta}^{(t)}, \mbf{A}^{(t)})$ from the posterior, which is then used to select $\mbf{Z}_t$ by maximizing the sampled expected total reward. To preserve posterior state across rounds, we warm-start the chain at each round with the final sample from the previous round. The full procedure is given as Algorithm~\ref{alg:gibbs_ts}.

\begin{algorithm}
\caption{Thompson Sampling with Edge-Wise Gibbs}
\begin{algorithmic}[1]
\State \textbf{Input:} Prior mean $\boldsymbol{\mu}_0$, prior covariance $\boldsymbol{\Sigma}_0$, noise variance $\sigma^2$, edge prior $\rho$, Gibbs iterations $K$
\State Initialize $\mbf{A}^{(0)}$ by drawing $A^{(0)}_{ij} \stackrel{\text{iid}}{\sim} \text{Bern}(\rho)$ for $i<j$ and symmetrizing
\For{$t = 1$ to $T$}
    \For{$k = 1$ to $K$}
        \State Build design matrix $\mbf{H}_{t-1}\!\left(\mbf{A}^{(k-1)}\right)$ from $\mathcal{D}_{t-1}$
        \State Sample $\boldsymbol{\theta}^{(k)} \sim \mathcal{N}\!\left(\boldsymbol{\mu}\!\left(\mbf{A}^{(k-1)}\right),\, \boldsymbol{\Sigma}\!\left(\mbf{A}^{(k-1)}\right)\right)$ via Eqs.~\eqref{eq:theta_cov}--\eqref{eq:theta_mean}
        \For{each pair $(i,j)$ with $i<j$}
            \State Compute $P(A_{ij}=1 \mid \mbf{A}_{-(ij)}, \boldsymbol{\theta}^{(k)}, \mathcal{D}_{t-1})$ via Eq.~\eqref{eq:edge_gibbs}
            \State Sample $A^{(k)}_{ij} \sim \text{Bern}(P)$ and set $A^{(k)}_{ji} \leftarrow A^{(k)}_{ij}$
        \EndFor
    \EndFor
    \State Set $(\boldsymbol{\theta}^{(t)}, \mbf{A}^{(t)}) \leftarrow (\boldsymbol{\theta}^{(K)}, \mbf{A}^{(K)})$
    \State Choose treatment vector:
    $\mbf{Z}_t = \arg\max_{\mbf{Z}:\|\mbf{Z}\|_1 \leq B} \mbf{1}_n^\top \!\left[\mbf{H}\!\left(\mbf{Z};\mbf{A}^{(t)}\right) \boldsymbol{\theta}^{(t)}\right]$
    \State Observe rewards $\mbf{r}_t$ and append $(\mbf{Z}_t, \mbf{r}_t)$ to $\mathcal{D}_t$
    \State Warm-start next round: $\mbf{A}^{(0)} \leftarrow \mbf{A}^{(t)}$
\EndFor
\end{algorithmic}
\label{alg:gibbs_ts}
\end{algorithm}

\paragraph{Computational complexity.}
The parameter update (Line 6) has cost $O(ntD^2 + D^3)$ per Gibbs sweep, identical to the known-graph algorithm of \citet{gleich2026scalable}. The edge update (Lines 7--9) sweeps over $\binom{n}{2}$ edges, and for each edge evaluates Equation~\eqref{eq:edge_gibbs} by scanning the $t-1$ historical rounds, giving $O(n^2 t)$ per sweep. Because the full conditional for each edge depends only on the reward streams at its two endpoints, no enumeration over edge configurations is required. The overall per-round complexity is $O(K(n^2 t + ntD^2 + D^3))$.

\paragraph{Practical considerations.}
We set $K=10$ Gibbs sweeps per round; Appendix~\ref{app:ablation} provides diagnostics consistent with adequate mixing. The Line~13 optimization is an integer linear program under general NIA, solved via Gurobi \citep{gurobi}; under linear-in-means it reduces to top-$B$ selection, eliminating the ILP solver.

\section{Theoretical Guarantees}
\label{sec:theory}

We present theoretical results under varying assumptions on the reward function. Under structural assumptions common within the causal inference literature, including additive spillovers and the linear-in-means model, total rewards depend on the network structure only through an $n$-dimensional vector of latent scores. Exploiting this, we prove a Bayesian regret bound for posterior sampling that is independent of graph topology. For general NIA rewards, we analyze an explore-then-commit strategy that decouples network and parameter learning, identifying $\mbf{A}$ through single-user treatments before running a known-graph algorithm on the recovered $\hat{\mbf{A}}$. Complementing both upper bounds, we prove an information-theoretic $\Omega(n\log T)$ lower bound on the cost of network discovery that holds for any consistent policy under general NIA rewards.

\subsection{Modular Collapse Under Additive Spillovers}
Consider reward functions of the form $r_i(\mbf{Z};\mbf{A},\boldsymbol{\theta})=\alpha_i+\tau_i Z_i+\sum_{j=1}^n \eta_{ij}Z_j$

where $\tau_i$ is the direct effect of treating unit $i$ and $\eta_{ij}$ is the indirect effect on unit $i$ from treating unit $j$ and $\eta_{ij}=0 \text{ if } A_{ij}=0.$ This family is standard in the causal inference literature and includes the linear-in-means model as a special case. Summing over units, the total reward becomes
\[\sum_{i=1}^n r_i(\mathbf{Z}; \mathbf{A}, \boldsymbol{\theta}) = c_{\mathbf{A}, \boldsymbol{\theta}} + \mathbf{Z}^\top \mathbf{s}_{\mathbf{A}, \boldsymbol{\theta}}, \qquad c_{\mathbf{A}, \boldsymbol{\theta}} = \sum_{i=1}^n \alpha_i, \quad s_j = \tau_j + \sum_{i=1}^n \eta_{ij}.\]
Total reward is \emph{linear} in $\mbf{Z}$ with coefficients determined by the score vector $\mbf{s}_{\mbf{A}, \boldsymbol{\theta}},$ reducing the problem to a linear bandit on $\mbf{Z}$ with latent coefficients determined by $\mbf{A}$ and $\boldsymbol{\theta}$.

\begin{assumption}[Unknown-graph modular collapse]
\label{ass:modular_collapse}
There exists a scalar $c_{\mbf{A},\boldsymbol{\theta}}$ and a score vector
$\mbf{s}_{\mbf{A},\boldsymbol{\theta}}\in\bb{R}^n$ such that, for every
$\mbf{Z}\in\cc{Z}_B$, $f_{\mbf{A},\boldsymbol{\theta}}(\mbf{Z}) = c_{\mbf{A},\boldsymbol{\theta}} + \mbf{Z}^{\top}\mbf{s}_{\mbf{A},\boldsymbol{\theta}}.$

\end{assumption}
The score vector $ \mbf{s}_{\mbf{A},\boldsymbol{\theta}}$ remains latent: it is a function of the unknown pair $(\mbf{A},\boldsymbol{\theta})$. The assumption asserts only that the optimization problem is linear in $\mbf{Z}$. We prove that the assumption holds for additive linear spillovers under NIA in Section~\ref{app:modular_corollaries}, including the linear-in-means model.

\begin{theorem}[Modular-collapse posterior sampling]\label{thm:modular_collapse}
Suppose Assumption~\ref{ass:modular_collapse} holds and the one-step
regret is bounded by $R$. Then exact posterior Thompson sampling over
$(\mbf{A},\boldsymbol{\theta})$ satisfies
\[
\operatorname{BayesReg}_T
  \le
  R
  \sqrt{
    \frac{nT\,\mathcal{H}(\mbf{Z}^*)}{2}
  }.
\]
Consequently,
\[
\operatorname{BayesReg}_T
  \le
  R
  \sqrt{
    \frac{nT\,B\log(en/B)}{2}
  }.
\]
\end{theorem}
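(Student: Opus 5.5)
The bound has exactly the form of the Russo--Van Roy information-ratio bound for Thompson sampling: $\sqrt{\bar\Gamma\,T\,\mathcal{H}(\mbf{Z}^*)}$, where $\bar\Gamma$ is a uniform bound on the information ratio. Here $\bar\Gamma = nR^2/2$, which is the standard value for linear bandits of dimension $n$ with rewards in an interval of length $R$. The plan is to carry out that argument with the latent score vector $\mbf{s}=\mbf{s}_{\mbf{A},\boldsymbol{\theta}}$ playing the role of the linear parameter.

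\textbf{Step 1: reduce to a linear bandit.} Under Assumption~\ref{ass:modular_collapse}, the action set is $\cc{Z}_B=\{\mbf{Z}\in\{0,1\}^n:\|\mbf{Z}\|_1\le B\}$. The objective $f_{\mbf{A},\boldsymbol{\theta}}(\mbf{Z})=c+\mbf{Z}^\top\mbf{s}$ depends on the unknown pair $(\mbf{A},\boldsymbol{\theta})$ only through $(c,\mbf{s})$, and the intercept $c$ cancels in every regret difference. Hence the optimal action $\mbf{Z}^*$ is a measurable function of $\mbf{s}$.

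Thompson sampling over $(\mbf{A},\boldsymbol{\theta})$ samples $(\tilde{\mbf A},\tilde{\boldsymbol\theta})$ from the posterior and plays the maximizer of the sampled objective. The maximizer is a function of the sampled score $\tilde{\mbf{s}}$. Therefore, conditional on the history $\cc{D}_{t-1}$, the played action $\mbf{Z}_t$ has the same law as $\mbf{Z}^*$. This is the probability-matching property, and it is the only property of TS the argument uses.

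The observation at round $t$ is the full vector $\mbf{r}_t$. Its total $\mbf{1}^\top\mbf{r}_t$ has conditional mean $c+\mbf{Z}_t^\top\mbf{s}$, so the total can serve as the reward signal. By the data-processing inequality, the information about $\mbf{Z}^*$ carried by $(\mbf{Z}_t,\mbf{r}_t)$ is at least the information carried by $(\mbf{Z}_t,\mbf{1}^\top\mbf{r}_t)$. Throwing away per-unit outcomes can therefore only make the bound weaker, so the reduction is safe.

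\textbf{Step 2: bound the information ratio.} Following Russo--Van Roy Proposition 5, write $\mathbb{E}_t$ for expectation given $\cc{D}_{t-1}$. The aim is
\[
\mathbb{E}_t[\Delta_t]^2 \le \tfrac{nR^2}{2}\, I_t(\mbf{Z}^*;(\mbf{Z}_t,Y_t)), \qquad \Delta_t = \mbf{Z}^{*\top}\mbf{s}-\mbf{Z}_t^\top\mbf{s}.
\]
The steps are as follows.
\begin{itemize}
\item Expand $I_t$ as $\sum_{z,z^*}P(z)P(z^*)\,\mathrm{KL}(P_{Y|z,z^*}\,\|\,P_{Y|z})$.
\item Lower bound each KL term by Pinsker, using the assumption that rewards take values in an interval of length $R$. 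This gives $\ge 2R^{-2}\bigl(\mathbb{E}_t[Y\mid z,z^*]-\mathbb{E}_t[Y\mid z]\bigr)^2$.
\item Write the expected regret, via probability matching, as the trace of the matrix $M_{z^*,z}=\sqrt{P(z^*)P(z)}\,\bigl(\mathbb{E}_t[z^\top\mbf{s}\mid z^*]-\mathbb{E}_t[z^\top\mbf{s}]\bigr)$.
\item Note that the sum of squared mean gaps is $\|M\|_F^2$.
\item Apply $\mathrm{tr}(M)^2\le\mathrm{rank}(M)\,\|M\|_F^2$.
\end{itemize}
The key point is the rank bound. $M$ factors as $UV^\top$, with rows of $U$ given by $\sqrt{P(z^*)}\bigl(\mathbb{E}_t[\mbf{s}\mid z^*]-\mathbb{E}_t[\mbf{s}]\bigr)\in\bb{R}^n$ and rows of $V$ given by $\sqrt{P(z)}\,z$. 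Hence $\mathrm{rank}(M)\le n$, and this holds regardless of the graph topology and the dimension $D$ of $\boldsymbol\theta$. This is exactly where modular collapse is needed: under general NIA the total reward is not linear in $\mbf{Z}$ and no such rank-$n$ factorization exists.

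\textbf{Step 3: sum over rounds.} By Cauchy--Schwarz and the chain rule of mutual information,
\[
\mathrm{BayesReg}_T \le \sqrt{\bar\Gamma\, T\sum_t \mathbb{E}\, I_t} \le \sqrt{\bar\Gamma\, T\, \mathcal{H}(\mbf{Z}^*)},
\]
and substituting $\bar\Gamma=nR^2/2$ gives the first display.

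\textbf{Step 4: the corollary.} Since $\mbf{Z}^*$ takes values in $\cc{Z}_B$,
\[
\mathcal{H}(\mbf{Z}^*)\le\log|\cc{Z}_B| = \log\sum_{k\le B}\binom{n}{k} \le B\log(en/B).
\]
The last inequality is the standard binomial-tail bound, valid for $B\le n$.

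\textbf{Main obstacle.} I expect the hardest step to be making the Pinsker step rigorous under the paper's hypotheses. "One-step regret bounded by $R$" is not literally the same as the sub-Gaussian or bounded-reward condition that Pinsker requires, and with Gaussian noise the observed total is unbounded.

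I would first interpret $R$ as the range of the (noisy) total reward, and handle the Gaussian case with the Russo--Van Roy sub-Gaussian variant of the Pinsker step, in which $R^2/4$ is replaced by the variance proxy $\sigma^2$. If necessary I would then set $R$ to absorb the noise scale. A secondary subtlety is ties in $\arg\max$: fix a deterministic tie-breaking rule so that $\mbf{Z}^*$ is a function of $\mbf{s}$ and probability matching holds exactly.
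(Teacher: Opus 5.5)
Your argument follows the same route as the paper's proof: push the posterior forward to the latent score $\mathbf{s}_{\mathbf{A},\boldsymbol{\theta}}$, use probability matching, apply the Russo--Van Roy linear information-ratio bound in dimension $n$ to the scalar total, pass to the full vector $\mathbf{r}_t$ by data processing, telescope, and finish with $\mathcal{H}(\mathbf{Z}^*)\le\log|\mathcal{Z}_B|\le B\log(en/B)$. The paper only cites the information-ratio lemma. You carry out the trace/rank argument explicitly, and that exposes one step that is wrong as written.

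The problem is your fourth bullet. The mean gaps that Pinsker controls belong to the observed total $Y_{\mathbf{z}}$, whose posterior mean is $c+\mathbf{z}^\top\mathbf{s}$, not $\mathbf{z}^\top\mathbf{s}$. The intercept cancels in the regret, so $\operatorname{tr}(M)=\mathbb{E}_t[\Delta_t]$ is fine. It does not cancel in the information: $c_{\mathbf{A},\boldsymbol{\theta}}$ (e.g.\ $\sum_i\alpha_i$) is random under the posterior and generally correlated with $\mathbf{Z}^*$. The gap matrix is therefore $M'=M+\mathbf{u}\mathbf{w}^\top$, with $u_{z^*}=\sqrt{P(z^*)}\,(\mathbb{E}_t[c\mid z^*]-\mathbb{E}_t[c])$ and $w_z=\sqrt{P(z)}$. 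Pinsker then gives $I_t\ge 2R^{-2}\|M'\|_F^2$, not $2R^{-2}\|M\|_F^2$, and factoring $M'$ directly only gives rank at most $n+1$. The paper has the same omission: its lemma is stated for $\mathbf{z}^\top\mathbf{s}$ and then applied to $Y_t^{\mathrm{tot}}$.

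The repair is short. By the tower property, $\mathbf{w}^\top M'=0$ and $\mathbf{w}^\top\mathbf{u}=0$. With the projector $P=I-\mathbf{w}\mathbf{w}^\top$ you therefore get $\operatorname{tr}(M)=\operatorname{tr}(M')=\operatorname{tr}(M'P)$ and $M'P=MP$. Hence
\[
\mathbb{E}_t[\Delta_t]^2=\operatorname{tr}(MP)^2\le\operatorname{rank}(M)\,\|MP\|_F^2=\operatorname{rank}(M)\,\|M'P\|_F^2\le n\,\|M'\|_F^2\le\frac{nR^2}{2}\,I_t,
\]
which restores the constant $nR^2/2$.

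Your concern about the Pinsker step is also well founded, and it applies equally to the paper. The hypothesis ``one-step regret bounded by $R$'' does not bound the range of the Gaussian-perturbed total. In the sub-Gaussian variant, the variance proxy must cover $c+\mathbf{z}^\top\mathbf{s}+\mathbf{1}_n^\top\boldsymbol{\varepsilon}$ under the posterior predictive, and its noise part alone has variance $n\sigma^2$. So $R$ in the final bound must be read as an effective scale with $R^2\ge 4n\sigma^2$, not as a regret bound.
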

The dimension factor is $n$, the length of the latent score vector, rather than the ambient dimension of $\theta$ as in \citet{gleich2026scalable}. The proof, given in Appendix~\ref{app:modular_collapse_proof}, applies the linear posterior-sampling information-ratio bound of \citet{russo2014learning} to the latent score vector. 

Theorem~\ref{thm:modular_collapse} bounds the Bayesian regret of exact posterior sampling. The Gibbs sampler of Section~\ref{sec:gibbs_sampler} produces approximate samples; regret analysis under approximation requires either a mixing-time bound for the edge-wise Gibbs sampler or a divergence based argument in the style of \citet{myphan}. \citet{mazumdar20a} establish that Langevin-based approximate Thompson sampling matches the regret rate of exact TS in linear bandits under standard mixing conditions. Whether analogous guarantees hold for our edge-wise Gibbs sampler is open. Empirically, the results of Section~\ref{sec:experiments} are consistent with the rate of Theorem~\ref{thm:modular_collapse} across all additive-spillover specifications, and Appendix~\ref{app:ablation} reports a sweep-count ablation with diagnostics consistent with adequate mixing.

\subsection{Explore-then-Commit with Thompson Sampling}

The algorithm operates in two phases. In Phase~1 (rounds $1$ to $T_0 = nm$), the experimenter treats each node $j \in \{1, \ldots, n\}$ in isolation for $m$ rounds, setting $\mbf{Z}_t = \textbf{e}_j$. When node $j$ is treated alone, each node $i \neq j$ has treated-neighbor count $d^1_{t,i} = A_{ij}$, producing reward $r_{t,i} = \gamma_1 A_{ij} + \varepsilon_{t,i}$. For each pair $(i,j)$, the experimenter computes the sample mean $\bar{r}_{ij}$ over $m$ observations and thresholds: $\hat{A}_{ij} = \mbf{1}\{\bar{r}_{ij} > \delta_\gamma / 2\}$. Here $\delta_\gamma$ is the minimum detectable spillover effect; this plays the same role as the minimum detectable effect size in classical power analysis and is assumed known to the experimenter. In Phase~2 (rounds $T_0 + 1$ to $T$), the experimenter runs the Thompson sampling algorithm of \citet{gleich2026scalable} using $\hat{\mbf{A}}$ as the known graph.

\begin{assumption}
\label{assumption:noise}
The noise terms $\varepsilon_{t,i}$ are independent $N(0, \sigma^2)$ random variables for all $t$ and $i$.
\end{assumption}

\begin{theorem}[ETC Upper Bound]
\label{thm:etc_upper}
Suppose Assumptions~1 and~\ref{assumption:noise} hold and that the spillover effect satisfies $\gamma_1 \geq \delta_\gamma >0$ for a known $\delta_\gamma$. Set $m = \lceil 8\sigma^2 \log(n^2 T) / \delta_\gamma^2 \rceil$. Then the explore-then-commit algorithm with Thompson sampling satisfies
\[
\bb{E}\!\left[\text{Reg}_T\right] \leq O\!\left(\frac{n^2 \sigma^2 (|\mu| + \gamma_{\max}) \log(nT)}{\delta_\gamma^2} + D\sqrt{nT\log(nT)}\right)
\]
where $\gamma_{\text{max}}$ is the maximum spillover effect.
\end{theorem}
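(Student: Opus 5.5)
We split the regret into the two phases and condition on a good event for graph recovery. Write $\text{Reg}_T = \text{Reg}^{(1)} + \text{Reg}^{(2)}$, where $\text{Reg}^{(1)}$ collects rounds $1,\dots,T_0$ with $T_0 = nm$, and $\text{Reg}^{(2)}$ collects rounds $T_0+1,\dots,T$. Let $\mathcal{E} = \{\hat{\mbf{A}} = \mbf{A}\}$ be the event of exact recovery.

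\textbf{Phase 1.} The per-round regret is bounded by $\max_{\mbf{Z}} f_{\mbf{A},\boldsymbol{\theta}}(\mbf{Z}) - \min_{\mbf{Z}} f_{\mbf{A},\boldsymbol{\theta}}(\mbf{Z})$. We bound each unit's reward range by $O(|\mu| + \gamma_{\max})$, where $\mu$ is the baseline/direct-effect scale of the parameterization. Summed over units this gives $O(n(|\mu|+\gamma_{\max}))$ per round. Multiplying by $T_0 = n\lceil 8\sigma^2\log(n^2T)/\delta_\gamma^2\rceil$ yields the first term,
\[
O\!\left(\frac{n^2\sigma^2(|\mu|+\gamma_{\max})\log(nT)}{\delta_\gamma^2}\right),
\]
using $\log(n^2T)\le 2\log(nT)$. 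If the per-unit range is instead scaled so that the whole network's range is $O(|\mu|+\gamma_{\max})$, the same form results. I will fix the scaling to match the stated bound, and I flag this normalization as something to check against the paper's reward parameterization.

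\textbf{Recovery event.} When $\mbf{Z}_t=\mbf{e}_j$, the observation $r_{t,i}$ for $i\ne j$ is $\gamma_1 A_{ij}+\varepsilon_{t,i}$ (after centering by the known or estimated baseline). Under Assumption~\ref{assumption:noise}, $\bar r_{ij}\sim N(\gamma_1A_{ij},\sigma^2/m)$. Since $\gamma_1\ge\delta_\gamma$, a misclassification requires a deviation of at least $\delta_\gamma/2$. The Gaussian tail gives error probability at most
\[
\exp\!\left(-\frac{m\delta_\gamma^2}{8\sigma^2}\right)\le \frac{1}{n^2T}.
\]
A union bound over at most $n^2$ ordered pairs then gives $P(\mathcal{E}^c)\le 1/T$. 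On $\mathcal{E}^c$ we pay at most $T\cdot O(n(|\mu|+\gamma_{\max}))\cdot P(\mathcal{E}^c) = O(n(|\mu|+\gamma_{\max}))$, which is absorbed into the first term.

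\textbf{Phase 2.} On $\mathcal{E}$, Phase 2 is exactly the known-graph Thompson sampling of \citet{gleich2026scalable} run on the true graph for at most $T$ rounds. We invoke their regret guarantee, $O(D\sqrt{nT\log(nT)})$, for a $D$-dimensional linear reward observed on $n$ units per round. Two points need justification:
\begin{itemize}
\item The Phase 2 data and prior. Phase 1 data can be discarded, or used as prior updates, which only helps. The guarantee then applies conditionally on $\hat{\mbf{A}}$ because $\hat{\mbf{A}}$ is independent of Phase 2 noise.
\item The guarantee's form. It is Bayesian or frequentist depending on that paper, and the expectation here must be interpreted consistently.
\end{itemize}

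The main obstacle is the Phase 1 baseline: the isolation-round reward for $i\ne j$ includes $\alpha_i$ (or $\mu$), not just $\gamma_1A_{ij}$. I would handle it first by assuming, as the paper's description does, that rewards are centered (e.g.\ via the known intercept). Failing that, I would add $m$ all-control rounds to estimate each baseline and threshold the differences. This changes the constant in $m$ from $8$ to $16$ but not the order of the bound.
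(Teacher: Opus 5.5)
Your proposal is correct and follows the paper's proof essentially step for step. The shared steps are: Phase~1 regret bounded by $T_0\cdot n(|\mu|+\gamma_{\max})$; a Gaussian tail at threshold $\delta_\gamma/2$ giving per-pair error at most $1/(n^2T)$; a union bound; the known-graph Thompson sampling guarantee on $\{\hat{\mbf{A}}=\mbf{A}\}$; and the failure event contributing $T\cdot n(|\mu|+\gamma_{\max})\cdot P(\hat{\mbf{A}}\neq\mbf{A}) = O(n(|\mu|+\gamma_{\max}))$, which is absorbed into the first term.

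The differences are cosmetic. Your union bound over ordered pairs gives $1/T$ instead of the paper's $1/(2T)$. Your baseline worry does not arise in the paper's setting: there, $\mbf{Z}_t=\mathbf{e}_j$ gives $r_{t,i}=\gamma_1 A_{ij}+\varepsilon_{t,i}$ directly, because the model has no intercept and the direct effect vanishes for $i\neq j$.
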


The first term reflects the cost of graph discovery: it scales as $n^2$ (one test per potential edge) and does not grow with $T$. The second term reflects the cost of learning $\boldsymbol{\theta}$, matching the known-graph rate of \citet{gleich2026scalable}. The algorithm requires knowledge of both $T$ and $\delta_\gamma$.

\subsection{Lower Bound}

We now establish that any algorithm must pay a cost for network discovery that scales linearly in $n$. This bound applies to any consistent policy, including the joint Gibbs sampler.

\begin{theorem}[Lower Bound]
\label{thm:lower}
Consider the setting of Theorem~\ref{thm:etc_upper} with $B = 1$, $\mu = 0$, and $\gamma_1 > 0$ known. For $n = 2p$ nodes arranged in $p$ disjoint pairs, any consistent policy $\pi$ satisfies
\[
\sup_{\mbf{A}}\; \bb{E}_{\mbf{A}}^{\pi}\!\left[\text{Reg}_T\right] \;\geq\; \frac{2(p-1)\sigma^2}{\gamma_1}\log T.
\]
\end{theorem}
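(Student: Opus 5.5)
The plan is a Lai--Robbins style change-of-measure argument. Each of the $p-1$ suboptimal pairs is a hidden "arm" whose status (edge present or absent) can only be learned by treating one of its endpoints. Since $B=1$ and $\mu=0$, at each round the policy treats a single node $j$ (or nobody). By NIA the only noisy reward whose mean depends on $A_{j\ell}$ is the reward of $j$'s partner $\ell$, which is $\gamma_1 A_{j\ell}+\varepsilon_{t,\ell}$. So, under the paired structure, the total expected reward of treating $j$ is $\gamma_1$ times the indicator that $j$'s pair is connected.

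\textbf{Base instance and alternatives.} Take the base instance $\mbf{A}^{(0)}$ in which pair $1$ is connected and pairs $2,\dots,p$ are not. Treating a node of pair $1$ is then optimal, and each round spent treating a node of pair $k\ge 2$ costs gap $\Delta=\gamma_1$. For each $k\ge 2$, build an alternative $\mbf{A}^{(k)}$ that differs from $\mbf{A}^{(0)}$ only on pair $k$ and in which pair $k$ is strictly optimal. Let $N_k(T)$ be the number of rounds in which a node of pair $k$ is treated. The divergence decomposition then gives
\[
\mathrm{KL}\bigl(\bb{P}^{\pi}_{\mbf{A}^{(0)}},\bb{P}^{\pi}_{\mbf{A}^{(k)}}\bigr)=\bb{E}_{\mbf{A}^{(0)}}[N_k(T)]\cdot\frac{\gamma_1^2}{2\sigma^2}.
\]
This holds because rounds treating nodes outside pair $k$ produce identically distributed observations under the two instances, while a round treating a node of pair $k$ shifts one Gaussian coordinate by $\gamma_1$.

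\textbf{Consistency step.} By consistency, the policy has $o(T^a)$ regret for every $a>0$ under both instances. Under $\mbf{A}^{(k)}$ it must treat pair $k$ on all but $o(T^a)$ rounds, while under $\mbf{A}^{(0)}$ it treats pair $k$ only $o(T^a)$ times. The Bretagnolle--Huber inequality applied to the event $\{N_k(T)>T/2\}$ then yields
\[
\liminf_{T\to\infty}\frac{\bb{E}_{\mbf{A}^{(0)}}[N_k(T)]\,\gamma_1^2/(2\sigma^2)}{\log T}\ge 1,
\]
that is, $\bb{E}_{\mbf{A}^{(0)}}[N_k(T)]\gtrsim 2\sigma^2\log T/\gamma_1^2$. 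Multiplying by the gap $\gamma_1$ and summing over $k=2,\dots,p$ gives
\[
\bb{E}_{\mbf{A}^{(0)}}[\text{Reg}_T]\ge \sum_{k\ge2}\gamma_1\,\bb{E}[N_k(T)]\ge \frac{2(p-1)\sigma^2}{\gamma_1}\log T,
\]
and hence the same bound holds for the supremum over $\mbf{A}$.

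\textbf{Main obstacle.} The delicate step is constructing $\mbf{A}^{(k)}$ so that pair $k$ is \emph{strictly} better than pair $1$ while the two instances still agree off pair $k$. Merely adding the edge in pair $k$ only produces a tie at reward $\gamma_1$, which gives zero regret gap in the alternative and breaks the consistency argument. Two obvious fixes fail. Removing the pair-$1$ edge in the alternative would make the KL also charge the $\approx T$ optimal pulls of pair $1$. Shrinking the base instance's pair-$1$ reward to $\gamma_1-\epsilon$ would change the known $\gamma_1$. My first attempt would be to keep the tie but use a refined consistency notion in which "optimal" means choosing the maximizer set; I would then show that under $\mbf{A}^{(k)}$ a consistent policy still cannot leave pair $k$ unexplored at logarithmic cost without risking linear regret on a perturbed instance. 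If that fails, I would work with the asymptotic (Graves--Lai) formulation and take a limit of slightly perturbed alternatives. This recovers the constant $2\sigma^2/\gamma_1$ per pair in the limit, with the $\log T$ rate and the $(p-1)$ additivity following as above.
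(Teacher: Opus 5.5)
Your skeleton (divergence decomposition, Bretagnolle--Huber on $\{N_k(T)>T/2\}$, summing over the $p-1$ empty pairs) is the same Lai--Robbins argument the paper uses when it cites Theorem~16.2 of Lattimore--Szepesv\'ari. However, the step you flag as the main obstacle is a genuine gap, and neither fallback closes it.

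Keeping the tie gives no constraint at all. When pairs $1$ and $k$ are both connected, always treating pair $1$ has zero regret, so consistency says nothing about $N_k$. There is also no perturbed instance to appeal to within the known-$\gamma_1$ pairing model.

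The problem is not merely technical. Suppose the admissible graphs are the subgraphs of the pairing. Then every pair's mean is known to lie in $\{0,\gamma_1\}$, and treating an optimal pair refutes every alternative in which that pair is suboptimal. Consider the greedy rule that treats the pair maximizing $\sum_s (X_s-\gamma_1/2)$, summed over past partner observations of that pair. It has expected regret bounded uniformly in $T$. The reason is that a connected pair's statistic is a positive-drift walk with an integrable all-time minimum. An empty pair can only be treated while its negative-drift walk sits above that minimum, which happens a finite expected number of times. Such a policy is consistent over this class and violates any $\log T$ bound.

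If instead you let $\mathbf{A}$ range over all graphs while keeping $\gamma_1$ fixed, beating $\gamma_1$ with $B=1$ needs a node of degree at least $2$. Each such alternative then shifts two coordinates per treatment of its center and also makes its leaves informative. The Graves--Lai program at your base instance therefore yields a strictly smaller constant than $2\sigma^2/\gamma_1$ per pair.

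The missing idea is to impose consistency over a richer class than the one the hard instance lives in: general NIA rewards, or at least pair-specific spillovers not pinned to $\gamma_1$. The paper's proof does this implicitly. It treats the problem as an unstructured $p$-armed Gaussian bandit, and Theorem~16.2 requires consistency over all Gaussian bandits; its $d_{\inf}$ alternatives raise a suboptimal arm's mean just above $\gamma_1$.

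Under that reading your argument closes directly. Take the alternative to be $\mathbf{A}^{(0)}$ plus the pair-$k$ edge with spillover $\gamma_1+\epsilon$ at both endpoints.
\begin{itemize}
\item The two instances agree off pair $k$, because those endpoints are isolated in $\mathbf{A}^{(0)}$.
\item Pair $k$ is strictly optimal in the alternative, with gap $\epsilon$.
\item The divergence is $\mathbb{E}_{\mathbf{A}^{(0)}}[N_k(T)](\gamma_1+\epsilon)^2/(2\sigma^2)$.
\item Your Bretagnolle--Huber step then goes through, and letting $\epsilon\downarrow 0$ recovers $2\sigma^2/\gamma_1^2$ treatments per empty pair.
\end{itemize}
So ``$\gamma_1$ known'' can only describe the hard instance, not the class over which consistency is required. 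Note also that this argument, like the paper's, yields the bound only in the form $\liminf_{T\to\infty}\mathbb{E}[\mathrm{Reg}_T]/\log T\ge 2(p-1)\sigma^2/\gamma_1$, not at every $T$.
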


The lower bound establishes that the cost of graph discovery scales at least as $\Omega(n \sigma^2 \log T / \gamma_1)$. Comparing with the upper bound, the graph-learning terms differ in their dependence on $n$ ($n$ versus $n^2$) and in the parametric dependence ($1/\gamma_1$ versus $\gamma_{\max}/\delta_\gamma^2$). Closing this gap is an interesting direction for future work; we note that the $n^2$ factor in the upper bound arises from testing all $\binom{n}{2}$ potential edges, and could potentially be tightened under sparsity assumptions on $\mbf{A}$.

\section{Experiments}\label{sec:experiments}

\subsection{Head-to-Head Comparison at $n = 8$}
\label{sec:exp_head_to_head}

We compare four algorithms at $n = 8$, where \citet{agarwal2024} remains computationally feasible: Gibbs-TS, ETC-TS, Gibbs-TS fit with a misspecified additive reward (omitting the $\xi$ block; 36 parameters vs.\ 204 for the well-specified fit), and \citet{agarwal2024} using their published codebase. Data is generated under a full NIA reward function with pairwise neighbor interactions:
\[
r_i(\mbf{Z}; \mbf{A}) = \mu_i Z_i + \sum_{j \in \mathcal{N}_i} \gamma_{ij} Z_j + \sum_{\substack{j, k \in \mathcal{N}_i \\ j < k}} \xi_{ijk} Z_j Z_k + \varepsilon_i.
\]
We test two regimes: small $\xi$ ($\xi_{ijk} \sim \text{Uniform}[-0.4, 0.4]$, contributing ${\sim}3.5\%$ of expected reward) and large $\xi$ ($\xi_{ijk} \sim \text{Uniform}[-3, 3]$, ${\sim}25\text{--}30\%$). Reward parameters are sampled $\mu\sim \text{Uniform}[0.5, 1.5]$ and $\gamma_{ij} \sim \text{Uniform}[0.3,1.0]$. Networks are Erdős-Rényi graphs with $p=0.3$. We set $\sigma^2 = 0.5$ and $B=3$. Figure~\ref{fig:nia_quadratic} presents results of $30$ independent trials for each setting.

\begin{figure}[htbp]
\centering
\includegraphics[width=0.98\textwidth]{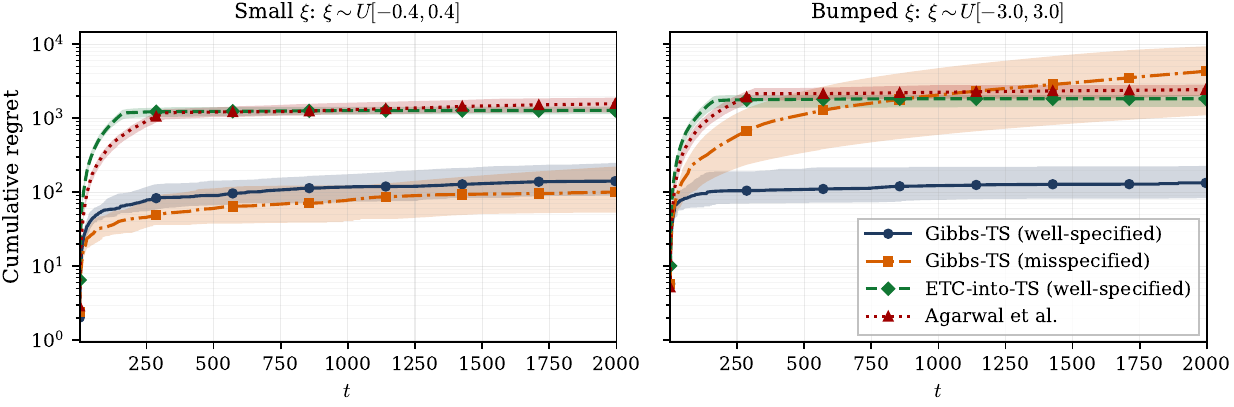}
\caption{Cumulative regret at $n=8$ on the pairwise-interaction reward function. Left: small $\xi$. Right: large $\xi$. Median regret at $T=2000$ for small/large $\xi$: well-specified Gibbs-TS 141/134, additive Gibbs-TS 101/4358, ETC-TS 1276/1833, \citet{agarwal2024} 1571/2435.}
\label{fig:nia_quadratic}
\end{figure}

Well-specified Gibbs-TS is stable across both regimes and beats baselines by 9--18$\times$. The additive fit wins under small $\xi$ but degrades sharply under large $\xi$. Theorem~\ref{thm:modular_collapse} does not formally cover this reward function (pairwise interactions violate modular collapse), yet Gibbs-TS achieves sublinear regret in both regimes, suggesting the algorithm's empirical performance extends beyond the theorem's formal scope. Appendices~\ref{app:nia_specs}--\ref{app:rho_sensitivity} verify the ordering across additional specifications, examine misspecification robustness, and report sensitivity to the edge prior $\rho$.

\subsection{Real-Network Experiments}
\label{sec:exp_real}

We further evaluate Gibbs-TS on two networks drawn from published applied studies with simulated outcomes. The first is the borrowing-money household network for village~10 from \citet{banerjee_diffusion2013}, with $n = 63$ households after dropping isolates, $114$ undirected edges, and density $0.058$. The second is the symmetrized friendship roster for school SCHID~3 from the anti-conflict experiment of \citet{paluck_shepherd_aranow2016}, used as a benchmark by \citet{aronow_samii}, with $n = 121$ students after dropping isolates, $251$ undirected edges, and density $0.035$.

We adopt the linear-in-means reward function:  $r_i = \mu_i Z_i + \beta_i \bar{Z}_{\mathcal{N}_i} + \varepsilon_i$, where $\bar{Z}_{\mathcal{N}_i}$ denotes the fraction of $i$'s neighbors that are treated. We set $\sigma = 1.0$, $T = 10{,}000$, and $B = \lceil n/5 \rceil$. Reward parameters are sampled $\mu_i, \beta_i \sim \text{Uniform}[0, 1]$. No prior adaptive method that learns the interference network scales to these sizes, so we benchmark Gibbs-TS against the Known-$\mbf{A}$ TS oracle of \citet{gleich2026scalable}, which runs Thompson sampling on the true network. We also include a no-interference TS baseline that assumes independence ($\mbf{A} = \boldsymbol{0}_{n\times n}).$

As seen in Figure~\ref{fig:real_data}, regret grows sublinearly in $t$ on both networks, indicating that joint learning continues to be effective at applied network sizes. The remaining gap to the Known-$\mbf{A}$ oracle, roughly $4.5\times$ on the village and $3.8\times$ on the school, is attributable to network learning rather than parameter learning, since both methods share the same Thompson sampler conditional on the network. Appendix~\ref{app:linmeans_scaling} reports a scaling study under the linear-in-means specification at $n \in \{50, 100, 250, 500, 1000\}$.

\subsection{Downstream causal estimation}
\label{sec:exp_downstream}

The recovered $\hat{\mathbf{A}}$ supports estimation of treatment effects through standard causal inference machinery. We consider three target estimands under the true $(\mbf{A}, \boldsymbol{\theta})$: the direct effect $\tau_D$, the indirect effect $\tau_{I}(1)$, and the total treatment effect $\tau_{\text{TTE}}.$ We compare three estimators: (i) the closed-form Gibbs posterior mean of $\boldsymbol{\theta}$ from the adaptive phase under $\hat{\mathbf{A}}$; (ii) OLS under $\hat{\mathbf{A}}$ fit to a separate randomized inference phase ($T_{\text{eval}} = T$ assignments drawn $Z_{t,i} \overset{\text{iid}}{\sim} \text{Bernoulli}(B/n)$, outcomes generated under the true $(\boldsymbol{\theta}^*, \mathbf{A})$); and (iii) the same randomized OLS fit under the true $\mathbf{A}$ as an oracle. The contrast between (ii) and (iii) isolates the downstream cost of graph misspecification, while the contrast between (i) and (ii) compares the adaptive posterior mean to a randomized-design benchmark. We set $\hat{\mbf{A}}$ to be the marginal posterior mean of each edge at round $T$, thresholded at $0.5$.

\begin{table}[h]
\centering
\small
\caption{RMSE of treatment-effect estimates under $\hat{\mathbf{A}}$ and the
true $\mathbf{A}$. Left: $n=20$ SBM (two groups, $p_{\text{within}} = 0.25$, $p_{\text{between}}= 0.05$), count-based NIA reward, 15 reps,
median F1 $= 0.98$. Right: village~10, linear-in-means reward, 10 reps,
median F1 $= 0.74$. Estimands are defined under the true
$(\mathbf{A}, \boldsymbol{\theta}^*)$.}
\label{tab:downstream}
\begin{minipage}{0.48\textwidth}
\centering
\begin{tabular}{@{}lccc@{}}
\toprule
Estimand & Gibbs ($\hat{\mathbf{A}}$) & OLS ($\hat{\mathbf{A}}$) & OLS ($\mathbf{A}$) \\
\midrule
$\tau_D = 0.97$            & 0.013 & 0.016 & 0.012 \\
$\tau_I(1) = 1.01$         & 0.011 & 0.035 & 0.009 \\
$\tau_{\text{TTE}} = 2.76$ & 0.123 & 0.094 & 0.017 \\
\bottomrule
\end{tabular}
\subcaption*{(a) $n=20$ SBM, count-based NIA}
\end{minipage}
\hfill
\begin{minipage}{0.48\textwidth}
\centering
\begin{tabular}{@{}lccc@{}}
\toprule
Estimand & Gibbs ($\hat{\mathbf{A}}$) & OLS ($\hat{\mathbf{A}}$) & OLS ($\mathbf{A}$) \\
\midrule
$\tau_D = 0.51$            & 0.039 & 0.037 & 0.002 \\
$\tau_I(1) = 0.22$         & 0.016 & 0.021 & 0.002 \\
$\tau_{\text{TTE}} = 1.00$ & 0.129 & 0.076 & 0.004 \\
\bottomrule
\end{tabular}
\subcaption*{(b) village~10, linear-in-means}
\end{minipage}
\end{table}
Table~\ref{tab:downstream} reports results in two settings. On the $n=20$ SBM, randomized OLS under $\hat{\mathbf{A}}$ closely tracks the oracle on $\tau_D$ and $\tau_{I}(1)$, with substantial degradation on $\tau_{\text{TTE}}$ that depends on the empirical degree distribution. On village~10, the cost of misspecification is larger across all three estimands but remains small relative to the true effect sizes. Two patterns are robust across settings: $\tau_D$ and $\tau_I(1)$ are estimable from $\hat{\mathbf{A}}$ at small cost when recovery is high, while $\tau_{\text{TTE}}$ depends on the full degree distribution and inherits more error from imperfect recovery.

\section{Limitations and Conclusion}
\label{sec:discussion}

\paragraph{Limitations.} Several questions remain open: closing the $n$ versus $n^2$ gap between the upper and lower bounds, extending modular collapse to richer spillover models, and a mixing-time analysis connecting the modular-collapse bound to the approximate Gibbs sampler. The framework relies on correct specification of the reward function; further generality would be useful for applications. The ILP solver becomes a bottleneck for large $n$: faster solvers would increase scalability. Further theoretical analysis conditional on standard sparsity assumptions on the graph could improve the theoretical bounds.

\paragraph{Conclusion.} We have presented an adaptive experimental design that jointly recovers the interference network and optimizes individual-level treatment policies. The paper makes both algorithmic and theoretical contributions. Algorithmically, an edge-wise Gibbs sampler whose full conditional factors across the endpoints of each edge enables joint posterior sampling, producing the recovered network $\hat{\mathbf{A}}$ as an explicit deliverable for downstream causal analysis (Section~\ref{sec:exp_downstream}). Theoretically, we prove a Bayesian regret bound of $\sqrt{nT \cdot B \log(en/B)}$ for exact posterior sampling under additive spillover models (Theorem~\ref{thm:modular_collapse}). For general Neighborhood Interference, an explore-then-commit upper bound and a complementary information-theoretic lower bound establish the cost of network discovery up to a factor of $n$. We see particular promise in settings where the interference network is itself a scientific deliverable: peer-effect estimation in education and labor economics, contagion networks in epidemiology, and social influence in online platforms.


\bibliography{papers}
\bibliographystyle{apalike}

\newpage 

\appendix
\section{Proof of Theorem~\ref{thm:modular_collapse}}\label{app:modular_collapse_proof}
We begin with a Lemma:

\begin{lemma}[Linear posterior-sampling information ratio]
\label{lem:route_b_linear_ir}
Consider posterior sampling for a linear objective
\[
f_{\mbf{s}}(\mbf{z})
  =
  \mbf{z}^{\top}\mbf{s},
\qquad
\mbf{s}\in\bb{R}^n,
\qquad
\mbf{z}\in\cc{Z}_B.
\]
If one-step regret is bounded by $R$, then the conditional information ratio
with scalar total-reward feedback satisfies
\[
\Gamma_t
  \le
  \frac{R^2 n}{2}.
\]
\end{lemma}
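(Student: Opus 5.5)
The plan is to adapt the linear-bandit information-ratio argument of \citet{russo2014learning}, with the latent score vector $\mbf{s}$ playing the role of the unknown parameter. Fix round $t$ and condition on the history $\mathcal{D}_{t-1}$; write $\bb{P}_t$ and $\bb{E}_t$ for posterior probability and expectation. The posterior over $(\mbf{A},\boldsymbol{\theta})$ induces a posterior over the optimal action $\mbf{Z}^*$ and over $\mbf{s}=\mbf{s}_{\mbf{A},\boldsymbol{\theta}}$. Thompson sampling draws $\mbf{Z}_t$ independently of $(\mbf{Z}^*,\mbf{s})$ given $\mathcal{D}_{t-1}$, with the same marginal law as $\mbf{Z}^*$. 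Since $\cc{Z}_B$ is finite, index actions by $z\in\cc{Z}_B$ and set $p_z=\bb{P}_t(\mbf{Z}^*=z)$. The constant $c_{\mbf{A},\boldsymbol{\theta}}$ of Assumption~\ref{ass:modular_collapse} cancels in every regret difference, so it can be dropped. If it is random, it should be absorbed into the feedback noise; I will assume it is harmless, and this should be checked.

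\textbf{Step 1 (regret as a trace).} Write the expected one-step regret as
\[
\bb{E}_t[\Delta_t]=\sum_{z} p_z\big(\bb{E}_t[z^\top\mbf{s}\mid \mbf{Z}^*=z]-\bb{E}_t[z^\top\mbf{s}]\big).
\]
This uses only the identical distribution and conditional independence of $\mbf{Z}_t$ and $\mbf{Z}^*$.

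\textbf{Step 2 (information gain as a Frobenius norm).} By the chain rule for mutual information,
\[
I_t(\mbf{Z}^*;(\mbf{Z}_t,Y_t))=\sum_{z,z'}p_z p_{z'}\,\mathrm{KL}\big(P_t(Y_t\mid \mbf{Z}_t=z,\mbf{Z}^*=z')\,\|\,P_t(Y_t\mid \mbf{Z}_t=z)\big).
\]
Here $Y_t$ is the scalar total reward. Pinsker's inequality for observations whose range is controlled by $R$ (the same normalization that bounds one-step regret by $R$) lower-bounds each KL term by $\tfrac{2}{R^2}$ times the squared difference of conditional means. This yields
\[
I_t\ge \tfrac{2}{R^2}\sum_{z,z'}p_zp_{z'}\big(z^\top(\bb{E}_t[\mbf{s}\mid \mbf{Z}^*=z']-\bb{E}_t[\mbf{s}])\big)^2 .
\]

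\textbf{Step 3 (rank bound).} Define the matrix $M_{z,z'}=\sqrt{p_zp_{z'}}\,z^\top(\bb{E}_t[\mbf{s}\mid\mbf{Z}^*=z']-\bb{E}_t[\mbf{s}])$. Step~1 shows $\bb{E}_t[\Delta_t]=\operatorname{tr}(M)$, and Step~2 shows $I_t\ge \tfrac{2}{R^2}\|M\|_F^2$. Because the objective is linear, $M=UV^\top$, where $U$ has rows $\sqrt{p_z}\,z^\top\in\bb{R}^n$ and $V$ has rows $\sqrt{p_{z'}}(\bb{E}_t[\mbf{s}\mid\mbf{Z}^*=z']-\bb{E}_t[\mbf{s}])^\top$, so $\operatorname{rank}(M)\le n$. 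The trace–Frobenius inequality $\operatorname{tr}(M)^2\le \operatorname{rank}(M)\,\|M\|_F^2$ then gives
\[
\Gamma_t=\frac{\bb{E}_t[\Delta_t]^2}{I_t}\le \frac{n\|M\|_F^2}{(2/R^2)\|M\|_F^2}=\frac{R^2n}{2}.
\]
The inequality $\operatorname{tr}(M)^2\le \operatorname{rank}(M)\|M\|_F^2$ follows by writing the trace as the sum of eigenvalues, applying Cauchy–Schwarz, and bounding the sum of squared eigenvalues by $\|M\|_F^2$.

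The main obstacle is Step~2, in two respects. First, the feedback used here is only the scalar sum of rewards, so it must be confirmed that its conditional mean given $\mbf{Z}_t=z$ is exactly $c+z^\top\mbf{s}$. Second, the Pinsker step has the right constant only if the observation is sub-Gaussian with scale tied to $R$. With the Gaussian noise of Assumption~\ref{assumption:noise}, the KL divergence between Gaussians is not directly Pinsker-bounded in terms of the range. I would instead use the sub-Gaussian version of the Russo--Van Roy bound, which gives $\mathrm{KL}\ge(\text{mean gap})^2/(2\varsigma^2)$ with $\varsigma^2$ the total-reward variance proxy. I would then state that $R$ in the lemma is to be read as $2\varsigma$ (or as the range, in the bounded case), so that the constant $R^2n/2$ is preserved.
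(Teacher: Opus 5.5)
Your proposal is correct and follows the paper's route: the paper's proof just invokes the Russo--Van Roy linear information-ratio bound applied to the latent score $\mathbf{s}$, and your Steps 1--3 (regret as $\operatorname{tr}(M)$, Pinsker giving $I_t \ge \tfrac{2}{R^2}\|M\|_F^2$, and $\operatorname{rank}(M)\le n$ from linearity) are exactly that argument written out. Your caveat is well placed, since $R$ must control the range or sub-Gaussian scale of the observed total reward rather than merely the one-step regret, a point the paper's citation glosses over; however, a random intercept $c$ correlated with $\mathbf{Z}^*$ cannot simply be absorbed into the noise (it raises the rank bound to $n+1$), so that side remark is harmless only because the lemma as stated has no intercept.
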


\begin{proof}
This is the standard linear posterior-sampling information-ratio bound of Russo and Van Roy specialized to an $n$-dimensional linear objective, as justified by Lemma 1 and Lemma 2 in \citet{gleich2026scalable} which extend the linear bandit theory to the case of an observed vector reward. The only modification needed is that the linear coefficient is the latent score vector rather than a directly observed parameter; since the IR bound depends only on the dimension of the linear coefficient, the same bound applies. The finite feasible set $\cc{Z}_B$ affects the entropy term in the final regret bound, but not the dimension factor in the information ratio.
\end{proof}
We now prove Theorem~\ref{thm:modular_collapse}:
\begin{proof}
Conditional on $\mathcal{D}_{t-1}$, exact posterior sampling over
$(\mbf{A},\boldsymbol{\theta})$ induces exact posterior sampling over the
latent score vector $\mbf{s}_{\mbf{A},\boldsymbol{\theta}}$. Under
Assumption~\ref{ass:modular_collapse}, the true optimal action
$\mbf{Z}^*$ and the sampled action $\mbf{Z}_t$ are both determined by the same
optimization map applied to the true and sampled score vectors. Hence the
posterior-sampling identity holds: conditional on $\mathcal{D}_{t-1}$,
$\mbf{Z}_t$ and $\mbf{Z}^*$ have the same distribution.

The IR bound of Lemma~\ref{lem:route_b_linear_ir} holds for any prior on $\mbf{s}$, since posterior sampling depends on the prior only through the posterior distribution of $\mbf{s}\mid\mathcal{D}_{t-1}$. The induced prior here, obtained as the pushforward of $\pi_0(\mbf{A},\boldsymbol{\theta})$ under the deterministic map $(\mbf{A},\boldsymbol{\theta})\mapsto\mbf{s}_{\mbf{A},\boldsymbol{\theta}}$,
is one such prior.

Let
\[
\Delta_t
  =
  \bb{E}_t[
    f_{\mbf{A},\boldsymbol{\theta}}(\mbf{Z}^*)
    -
    f_{\mbf{A},\boldsymbol{\theta}}(\mbf{Z}_t)
  ]
\]
and let $Y_t^{\mathrm{tot}}=\mbf{1}_n^\top\mbf{r}_t$ be the scalar total
reward. Lemma~\ref{lem:route_b_linear_ir} gives
\[
\Delta_t^2
  \le
  \frac{R^2n}{2}
  I_t(\mbf{Z}^*;(\mbf{Z}_t,Y_t^{\mathrm{tot}})).
\]
The actual observation is the full reward vector $\mbf{r}_t$, and
$Y_t^{\mathrm{tot}}$ is a measurable function of $\mbf{r}_t$. Therefore, by
data processing,
\[ 
I_t(\mbf{Z}^*;(\mbf{Z}_t,\mbf{r}_t))
  \ge
I_t(\mbf{Z}^*;(\mbf{Z}_t,Y_t^{\mathrm{tot}})).
\]

Since the IR is decreasing in the denominator $I_t$, the bound obtained in the scalar observation setting continues to hold when $I_t$ is replaced by the larger information from the node-level reward vector. 

The standard posterior-sampling information-ratio telescoping argument gives
\[
\operatorname{BayesReg}_T
  \le
  R
  \sqrt{
    \frac{nT\,\mathcal{H}(\mbf{Z}^*)}{2}
  }.
\]
Finally,
\[
H(\mbf{Z}^*)
  \le
  \log |\cc{Z}_B|
  \le
  B\log(en/B),
\]
which proves the second display.
\end{proof}

\subsection{Corollaries}\label{app:modular_corollaries}
We now show that Assumption~\ref{ass:modular_collapse} holds for a general class of additive reward functions. For added clarity, we show how common but more restrictive classes of reward functions factor into the linear formulation.

\begin{corollary}[Additive directed edge spillovers]
\label{cor:additive_edge_spillovers}
Suppose rewards take the form
\[
r_i(\mbf{z};\mbf{A},\boldsymbol{\theta})
  =
  \alpha_i+\tau_i z_i+\sum_{j=1}^n \eta_{ij}z_j,
\qquad
\eta_{ij}=0 \text{ if } A_{ij}=0.
\]
Then Assumption~\ref{ass:modular_collapse} holds with
\[
c_{\mbf{A},\boldsymbol{\theta}}
  =
  \sum_{i=1}^n \alpha_i,
\qquad
s_j
  =
  \tau_j+\sum_{i=1}^n\eta_{ij}.
\]
Therefore the regret bound of
Theorem~\ref{thm:modular_collapse} applies to ideal
posterior sampling over the unknown graph and reward parameters.
\end{corollary}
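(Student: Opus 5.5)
The plan is a direct algebraic verification of Assumption~\ref{ass:modular_collapse}, followed by an appeal to Theorem~\ref{thm:modular_collapse}. Fix any pair $(\mbf{A},\boldsymbol{\theta})$, where $\boldsymbol{\theta}$ collects $(\alpha_i,\tau_i,\eta_{ij})$. I will sum the unit-level rewards over $i$ and exchange the order of the finite double sum:
\[
f_{\mbf{A},\boldsymbol{\theta}}(\mbf{z})
 = \sum_{i=1}^n \alpha_i + \sum_{i=1}^n \tau_i z_i + \sum_{j=1}^n z_j \sum_{i=1}^n \eta_{ij}
 = \sum_{i=1}^n \alpha_i + \sum_{j=1}^n z_j\Bigl(\tau_j + \sum_{i=1}^n \eta_{ij}\Bigr).
\]
In the second equality, the direct-effect sum is relabeled with index $j$. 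This exhibits $f_{\mbf{A},\boldsymbol{\theta}}(\mbf{z}) = c_{\mbf{A},\boldsymbol{\theta}} + \mbf{z}^\top \mbf{s}_{\mbf{A},\boldsymbol{\theta}}$ with the stated $c$ and $s_j$.

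The identity holds for every $\mbf{z}\in\{0,1\}^n$, and hence for every $\mbf{z}\in\cc{Z}_B$. The budget constraint plays no role in this step.

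Two points need to be noted. First, $c$ and $\mbf{s}$ must be deterministic functions of $(\mbf{A},\boldsymbol{\theta})$ alone, and must not depend on $\mbf{z}$. This is immediate from the formulas. The constraint $\eta_{ij}=0$ whenever $A_{ij}=0$ is exactly what makes $\mbf{s}$ depend on $\mbf{A}$ in general. Because the assumption only requires existence of some such map, however, the constraint needs no separate handling. Second, the model is consistent with NIA, since $r_i$ depends only on $z_i$ and $\{z_j: A_{ij}=1\}$. The directed, possibly asymmetric form of $\eta_{ij}$ is harmless because nothing above uses symmetry.

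With Assumption~\ref{ass:modular_collapse} verified, I will invoke Theorem~\ref{thm:modular_collapse} directly under its bounded one-step regret hypothesis. That theorem requires exact posterior sampling over $(\mbf{A},\boldsymbol{\theta})$, which is precisely the "ideal posterior sampling" in the corollary's conclusion. No step presents a real obstacle. The only care needed is to state that the score map is measurable, so that the pushforward prior on $\mbf{s}$ used in the proof of Theorem~\ref{thm:modular_collapse} is well defined. This holds trivially, because the map is polynomial (indeed linear) in $\boldsymbol{\theta}$ for each of the finitely many graphs $\mbf{A}$.
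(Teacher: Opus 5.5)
Your proposal is correct and matches the paper's proof: both sum the node-level rewards, swap the order of the finite double sum, and read off $c_{\mbf{A},\boldsymbol{\theta}}$ and $s_j$ directly. Your remarks on NIA consistency, the bounded one-step regret hypothesis, and measurability of the score map are harmless extra care that the paper leaves implicit.
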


\begin{proof}
Summing the node-level rewards gives
\[
\sum_{i=1}^n r_i(\mbf{z};\mbf{A},\boldsymbol{\theta})
  =
  \sum_{i=1}^n \alpha_i
  +
  \sum_{i=1}^n \tau_i z_i
  +
  \sum_{i=1}^n\sum_{j=1}^n \eta_{ij}z_j
  =
  c_{\mbf{A},\boldsymbol{\theta}}
  +
  \sum_{j=1}^n z_j
  \left(\tau_j+\sum_{i=1}^n\eta_{ij}\right).
\]
This is exactly the modular-collapse representation.
\end{proof}

\begin{corollary}[Linear treated-neighbor exposure]
\label{cor:linear_treated_neighbor}
Suppose
\[
r_i(\mbf{z};\mbf{A},\boldsymbol{\theta})
  =
  \alpha_i+\tau_i z_i
  +
  \lambda_i d_i^1(\mbf{z};\mbf{A}),
\qquad
d_i^1(\mbf{z};\mbf{A})=\sum_{j=1}^n A_{ij}z_j .
\]
Then Assumption~\ref{ass:modular_collapse} holds with
\[
s_j
  =
  \tau_j+\sum_{i=1}^n \lambda_i A_{ij}.
\]
Therefore the regret bound of
Theorem~\ref{thm:modular_collapse} applies.
\end{corollary}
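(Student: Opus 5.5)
The plan is to reduce Corollary~\ref{cor:linear_treated_neighbor} to Corollary~\ref{cor:additive_edge_spillovers}, or equivalently to redo its one-line summation directly. The reward $r_i = \alpha_i + \tau_i z_i + \lambda_i \sum_j A_{ij} z_j$ is already of the additive directed-edge form with $\eta_{ij} := \lambda_i A_{ij}$. To use the previous corollary I will check its support condition: if $A_{ij}=0$ then $\eta_{ij} = \lambda_i\cdot 0 = 0$, so $\eta_{ij}=0$ whenever $A_{ij}=0$. Corollary~\ref{cor:additive_edge_spillovers} then gives Assumption~\ref{ass:modular_collapse} with $c_{\mbf{A},\boldsymbol{\theta}}=\sum_i\alpha_i$ and $s_j = \tau_j + \sum_i \eta_{ij} = \tau_j + \sum_i \lambda_i A_{ij}$, which is exactly the claimed score.

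As a self-contained check, I will also sum directly. This gives
\[
\sum_{i=1}^n r_i = \sum_i \alpha_i + \sum_i \tau_i z_i + \sum_i\sum_j \lambda_i A_{ij} z_j ,
\]
and exchanging the finite double sum gives $\sum_j z_j\bigl(\sum_i \lambda_i A_{ij}\bigr)$. Collecting the coefficient of each $z_j$ then yields $c + \mbf{z}^\top\mbf{s}$ for every $\mbf{z}\in\cc{Z}_B$. Indeed it holds for every $\mbf{z}\in\{0,1\}^n$, so the budget constraint plays no role.

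Once modular collapse holds, the final sentence follows by invoking Theorem~\ref{thm:modular_collapse}, under its standing hypothesis that the one-step regret is bounded by $R$. One point needs care: $s_j$ depends on the unknown pair $(\mbf{A},\boldsymbol{\theta})$ through $\lambda$ and $\mbf{A}$. This is permitted, because Assumption~\ref{ass:modular_collapse} only requires the scalar and the score vector to be functions of $(\mbf{A},\boldsymbol{\theta})$, and the proof of Theorem~\ref{thm:modular_collapse} works with the pushforward prior on $\mbf{s}$.

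There is no real obstacle; the argument is a bookkeeping exercise. The only subtlety is directionality. Using $A_{ij}$ rather than $A_{ji}$ means the score sums over column $j$, so $s_j$ collects the spillovers that $j$ exerts on the units $i$ that "listen" to $j$. Under the paper's symmetric $\mbf{A}$ this distinction is immaterial.
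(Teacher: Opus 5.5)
Your proposal is correct and matches the paper's proof. The paper likewise rewrites $\lambda_i d_i^1(\mbf{z};\mbf{A})=\sum_{j}\lambda_i A_{ij}z_j$ and invokes Corollary~\ref{cor:additive_edge_spillovers} with $\eta_{ij}=\lambda_i A_{ij}$; your explicit check of the support condition and the direct double-sum verification only make that reduction more transparent.
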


\begin{proof}
The spillover term can be rewritten as
\[
\lambda_i d_i^1(\mbf{z};\mbf{A})
  =
  \sum_{j=1}^n \lambda_i A_{ij}z_j.
\]
This is the additive edge-spillover model of
Corollary~\ref{cor:additive_edge_spillovers} with
$\eta_{ij}=\lambda_iA_{ij}$.
\end{proof}

\begin{corollary}[Homogeneous linear spillovers]
\label{cor:homogeneous_linear_spillovers}
Suppose
\[
r_i(\mbf{z};\mbf{A},\boldsymbol{\theta})
  =
  \alpha_i+\tau_i z_i
  +
  \lambda d_i^1(\mbf{z};\mbf{A}).
\]
Then Assumption~\ref{ass:modular_collapse} holds with
\[
s_j
  =
  \tau_j+\lambda \deg_{\mbf{A}}(j).
\]
Thus the unknown graph matters for treatment optimization only through the
degree-weighted latent scores.
\end{corollary}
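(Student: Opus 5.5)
The plan is to reduce Corollary~\ref{cor:homogeneous_linear_spillovers} to Corollary~\ref{cor:linear_treated_neighbor}, exactly as Corollary~\ref{cor:linear_treated_neighbor} was reduced to Corollary~\ref{cor:additive_edge_spillovers}. The homogeneous model is the special case $\lambda_i=\lambda$ for all $i$. By Corollary~\ref{cor:linear_treated_neighbor}, Assumption~\ref{ass:modular_collapse} then holds with $c_{\mbf{A},\boldsymbol{\theta}}=\sum_i\alpha_i$ and
\[
s_j=\tau_j+\sum_{i=1}^n\lambda A_{ij}=\tau_j+\lambda\sum_{i=1}^n A_{ij}.
\]

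The one substantive step is identifying $\sum_{i=1}^n A_{ij}$ with $\deg_{\mbf{A}}(j)$. Because $\mbf{A}$ is symmetric with zero diagonal (as in the prior and setup of Section~\ref{sec:model}), the column sum $\sum_i A_{ij}$ equals the row sum $\sum_i A_{ji}=|\mathcal{N}_j|$, which is the degree of $j$. If $\mbf{A}$ were allowed to be directed, the same formula still holds with $\deg_{\mbf{A}}(j)$ read as the in-degree of $j$, i.e.\ the number of units that $j$ spills onto. I will state this interpretation explicitly.

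The final sentence of the statement then follows directly. The maximizer of $c+\mbf{Z}^\top\mbf{s}$ over $\cc{Z}_B$ depends on $\mbf{A}$ only through $\mbf{s}$. Since $\mbf{s}$ depends on $\mbf{A}$ only through the degree vector, the optimizer is obtained by top-$B$ selection over the positive entries of $\tau_j+\lambda\deg_{\mbf{A}}(j)$.

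There is no real obstacle. The only care needed is the symmetry and zero-diagonal convention, which ensures the column sum is the degree and that no self-loop term $A_{jj}$ enters.
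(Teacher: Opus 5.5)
Your proposal is correct and follows the paper's own proof: specialize Corollary~\ref{cor:linear_treated_neighbor} to $\lambda_i=\lambda$, then use that $\mathbf{A}$ is undirected to identify the column sum $\sum_i A_{ij}$ with $\deg_{\mathbf{A}}(j)$. Your remarks on the directed case and on top-$B$ selection over the positive scores go slightly beyond the paper but are consistent with it.
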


\begin{proof}
This is Corollary~\ref{cor:linear_treated_neighbor} with
$\lambda_i=\lambda$ for every $i$. Since $\mbf{A}$ is undirected,
$\sum_i A_{ij}=\deg_{\mbf{A}}(j)$.
\end{proof}

\section{Proof of Theorem~\ref{thm:lower}}
\label{app:lower_proof}

Suppose the node-level rewards are of the form
\[r_i(\mbf{Z}_t) = \mu \cdot Z_{t,i} + \gamma_1 \mbf{1}\{d_{t,i}^1 = 1\}.\]
That is, there is a direct effect and a spillover effect from having exactly one treated neighbor. Our hard instance involves a graph where nodes have at most one neighbor, motivating this reward function.

\begin{proof}
For each $k \in \{1,\ldots, p\}$, define $\mbf{A}^{(k)}$ as the graph where only pair $k$ is connected. The only nonzero expected reward comes from treating either node in pair $k$. The agent's action at each step is the choice of which pair to treat.

Define $u_k$ and $v_k$ as the two nodes in pair $k$. If the agent treats $u_k$, she observes $r_{t,v_k} = \gamma_1 + \varepsilon_{t,v_k}$ and $r_{t,i} = \varepsilon_{t,i}$ for all $i \neq v_k$. While the agent observes $n$ rewards per round (unlike one in a standard $m$-armed bandit), $r_{t,v_k}$ is sufficient for the identity of the connected pair $k$: all other rewards are independent of the action choice and provide no information about which pair is connected. When the agent treats a pair $j \neq k$, all node-level rewards have mean zero.

This is an $m$-armed Gaussian bandit with mean $\gamma_1$ if the agent pulls arm $k$ and mean $0$ otherwise. By Theorem~16.2 of \citet{Lattimore_Szepesvári_2020}, any consistent policy satisfies $\liminf_{T \to \infty} \bb{E}[N_j(T)] / \log T \geq 2\sigma^2 / \gamma_1^2$ for each suboptimal arm $j \neq k$. Each pull of arm $j \neq k$ incurs regret $\gamma_1$. Summing over the $p-1$ suboptimal arms yields the result.
\end{proof}

\section{Proof of Theorem~\ref{thm:etc_upper}}
\label{app:etc_proof}

\begin{proof}
\textbf{Phase 1 (rounds $1$ to $T_0 = nm$).} The agent cycles through each node $j \in \{1, \ldots, n\}$, treating it alone for $m$ rounds by setting $\mbf{Z}_t = e_j$. When node $j$ is treated alone, each other node $i$ has treated-neighbor count $d_{t,i}^1 = A_{ij}$, so
\[
r_{t,i} = \gamma_1 A_{ij} + \varepsilon_{t,i}.
\]
For each pair $(i,j)$, the agent computes the sample mean $\bar{r}_{ij}$ over $m$ observations and thresholds: set $\hat{A}_{ij} = 1$ if $\bar{r}_{ij} > \delta_\gamma / 2$, else $\hat{A}_{ij} = 0$.

If $A_{ij} = 0$, then $\bar{r}_{ij}$ has mean $0$ and the error probability is
\[
P(\bar{r}_{ij} > \delta_\gamma / 2) \leq \exp\!\left(-\frac{m\delta_\gamma^2}{8\sigma^2}\right).
\]
If $A_{ij} = 1$, then $\bar{r}_{ij}$ has mean $\gamma_1 \geq \delta_\gamma$ and the error probability is
\[
P(\bar{r}_{ij} \leq \delta_\gamma / 2) = P(\bar{r}_{ij} - \gamma_1 \leq \delta_\gamma/2 - \gamma_1) \leq \exp\!\left(-\frac{m(\gamma_1 - \delta_\gamma/2)^2}{2\sigma^2}\right) \leq \exp\!\left(-\frac{m\delta_\gamma^2}{8\sigma^2}\right),
\]
where the last inequality uses $\gamma_1 \geq \delta_\gamma$, which implies $\gamma_1 - \delta_\gamma/2 \geq \delta_\gamma/2$. By our choice of $m$, we have $P(\hat{A}_{ij} \neq A_{ij}) \leq 1/(n^2 T)$. Taking a union bound over all $\binom{n}{2} \leq n^2/2$ pairs:
\[
P(\hat{\mbf{A}} \neq \mbf{A}) \leq \frac{1}{2T}.
\]

Each round of Phase~1 incurs regret at most $R_{\max} := n(|\mu| + \gamma_{\max})$. The total Phase~1 regret is at most
\[
T_0 \cdot R_{\max} = O\!\left(\frac{n^2 \sigma^2 (|\mu| + \gamma_{\max}) \log(nT)}{\delta_\gamma^2}\right).
\]

\textbf{Phase 2 (rounds $T_0 + 1$ to $T$).} Condition on the event $\{\hat{\mbf{A}} = \mbf{A}\}$, which holds with probability $\geq 1 - 1/(2T)$. On this event, the estimated graph equals the true graph and the Thompson sampling algorithm of \citet{gleich2026scalable} applies directly to the remaining $T - T_0 \leq T$ rounds, giving Phase~2 regret $O(D\sqrt{nT\log(nT)})$. On the complementary event $\{\hat{\mbf{A}} \neq \mbf{A}\}$, which occurs with probability $\leq 1/(2T)$, worst-case regret over $T$ rounds is at most $T \cdot R_{\max}$.

Taking expectations over the graph recovery event:
\[
\bb{E}[\text{Reg}_T] \leq \frac{n^2 \sigma^2 (|\mu| + \gamma_{\max}) \log(nT)}{\delta_\gamma^2} + O\!\left(D\sqrt{nT\log(nT)}\right) + T \cdot R_{\max} \cdot \frac{1}{2T}.
\]
The third term is $R_{\max}/2 = O(n(|\mu| + \gamma_{\max}))$, which is absorbed into the first term.
\end{proof}

\section{Additional Experiments}
\subsection{Further Comparisons to \citet{agarwal2024}}
\label{app:nia_specs}

We use two additional reward functions, each drawn from the full NIA parameterization, chosen to exercise different modes of interference.

\paragraph{Spec A: saturation.}
The direct effect of treating node $i$ is active only when none of $i$'s neighbors are also treated:
\[
r_i(\mbf{Z}; \mbf{A}) = \mu_i Z_i \cdot \mbf{1}\{d^1_i = 0\} + \sum_{j \in \mathcal{N}_i} \gamma_{ij} Z_j + \varepsilon_i.
\]
This captures settings in which a unit's own treatment is informative only when the unit is otherwise unexposed, such as a vaccine whose efficacy for the recipient is swamped by transmission reductions in highly vaccinated neighborhoods. Spillover effects are identity-specific via $\gamma_{ij}$, so the reward depends on which neighbors are treated rather than only on how many.

\paragraph{Spec B: treatment--spillover interaction.}
The direct and spillover effects interact through the count of treated neighbors:
\[
r_i(\mbf{Z}; \mbf{A}) = \mu_i Z_i + \sum_{j \in \mathcal{N}_i} \gamma_{ij} Z_j + \lambda_i Z_i \cdot d^1_i + \varepsilon_i.
\]
The interaction parameter $\lambda_i$ can take either sign, capturing settings in which treatment amplifies or dampens spillover exposure.

For both specifications we sample $\mu_i \sim \text{Uniform}[0.5, 1.5]$ and $\gamma_{ij} \sim \text{Uniform}[0.3, 1.0]$ independently per rep. For specification B, $\lambda_i \sim \text{Uniform}[-0.3, 0.3]$. We set $\sigma = 0.5$, $T = 2000$, and $B = 3$. The Thompson sampling algorithm uses a Gaussian prior on $\boldsymbol{\theta}$ with $\boldsymbol{\mu}_0 = \mbf{0}$, $\boldsymbol{\Sigma}_0 = 10 \mbf{I}$, and an edge prior $\rho = 0.3$. We use $K = 10$ Gibbs sweeps. The ETC variant uses $m = 20$ rounds of isolated treatment per node. The \citet{agarwal2024} implementation uses their default Lasso regularization selected by cross-validation.

Figure~\ref{fig:nia_specs} reports cumulative regret over time. On both specifications, Thompson sampling with the joint Gibbs sampler achieves substantially lower regret than either baseline. Median cumulative regret at $T = 2000$ is 64 under specification A and 93 under specification B, compared to 1113 and 1176 for ETC and 1342 and 1475 for \citet{agarwal2024}: an improvement of more than an order of magnitude over the nearest competitor. The ordering is consistent across reward functions, indicating that the algorithm's advantage does not depend on a particular mode of interference.

\begin{figure}[htbp]
\centering
\includegraphics[width=0.98\textwidth]{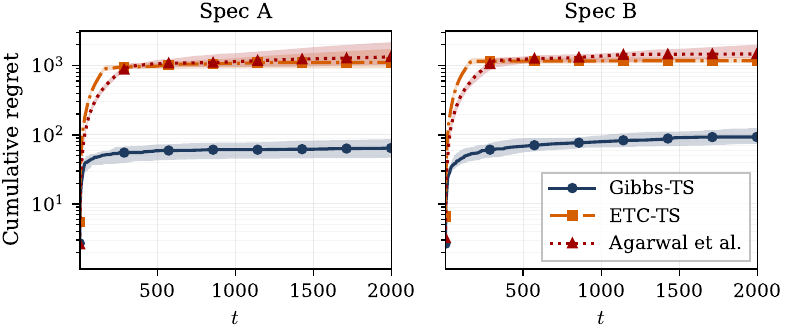}
\caption{Cumulative regret at $n = 8$ under two NIA reward functions. Left: saturation. Right: treatment--spillover interaction. Shaded bands denote 25th--75th percentile across 30 replications. Joint Gibbs Thompson sampling achieves more than an order-of-magnitude improvement over the nearest baseline on both reward functions.}
\label{fig:nia_specs}
\end{figure}

\subsection{Robustness to Reward Model Misspecification}
\label{app:misspec}

The head-to-head experiments of Section~\ref{sec:exp_head_to_head} grant Gibbs-TS and ETC-TS a fit model that matches the truth. To test sensitivity to reward model misspecification, we fit both methods to a count-based NIA specification, $r_i = \mu_i Z_i + \sum_k \gamma_{i,k} \mathbf{1}\{d^1_i = k\}$, which represents spillover only through the count $d^1_i$ and has no treatment-by-spillover cross term. Under specification A this drops the per-edge heterogeneity in $\gamma_{ij}$ and the saturation gate on the direct effect; under specification B it drops the per-edge heterogeneity and the $Z_i \cdot d^1_i$ interaction. \citet{agarwal2024} operates over the full $2^n$ exposure space and carries no such representation gap.

\begin{table}[h]
\centering
\small
\caption{Median cumulative regret at $T=2000$ under reward model misspecification. Gibbs-TS and ETC-TS fit a misspecified reward model; \citet{agarwal2024} is well-specified.}
\label{tab:misspec}
\begin{tabular}{@{}lcc@{}}
\toprule
 & Spec A & Spec B \\
\midrule
Gibbs-TS (misspec.) & 333.5 & 243.4 \\
ETC-TS (misspec.) & 1214.8 & 1799.3 \\
Agarwal et al. (2024) & 1342.3 & 1475.0 \\
\bottomrule
\end{tabular}
\end{table}

Table~\ref{tab:misspec} reports median cumulative regret at $T=2000$. Gibbs-TS retains a 4--6$\times$ advantage over \citet{agarwal2024} despite the representation gap, with IQRs of $[134.6, 1242.9]$ on Spec A and $[113.4, 514.2]$ on Spec B. The cost of misspecification is real, roughly $5\times$ relative to the well-specified Gibbs-TS of Section~\ref{sec:exp_head_to_head}.

\subsection{Sensitivity to the Edge Prior $\rho$}
\label{app:rho_sensitivity}

We rerun the small-$\xi$ head-to-head experiment of Section~\ref{sec:exp_head_to_head} at $n=8$ with the edge prior swept over $\rho \in \{0.05, 0.15, 0.30, 0.50, 0.70\}$, where the true edge density is $0.30$. We use $15$ seeds per value with matched seeds across $\rho$, so within-seed differences in final regret are attributable solely to the prior. Table~\ref{tab:rho_sensitivity} reports median final regret at $T=2{,}000$ along with the interquartile range. Performance is robust across the grid: median regret varies within a factor of $1.5$ even when $\rho$ is misspecified by $6\times$ below or $2.3\times$ above the truth, and the IQRs overlap across all five settings. Heavier right tails at $\rho \geq 0.50$ reflect occasional slow pruning of dense initial graphs; medians, robust to these tails, remain stable.

\begin{table}[h]
\centering
\small
\caption{Sensitivity of Gibbs-TS to the edge prior $\rho$ on the small-$\xi$ pairwise-interaction reward at $n=8$, $T=2{,}000$, 15 matched seeds. The true edge density is $0.30$. Median final regret is stable across an order of magnitude of misspecification.}
\label{tab:rho_sensitivity}
\begin{tabular}{@{}lccccc@{}}
\toprule
$\rho$            & $0.05$         & $0.15$         & $0.30$         & $0.50$         & $0.70$         \\
\midrule
Median regret     & $174.7$        & $120.6$        & $139.3$        & $114.0$        & $121.8$        \\
IQR               & $[104, 324]$   & $[79, 204]$    & $[86, 275]$    & $[89, 313]$    & $[91, 230]$    \\
\bottomrule
\end{tabular}
\end{table}

\subsection{Count-Based NIA at $n\in\{20,40\}$}
\label{sec:exp_sania_scaling}

To evaluate scaling beyond the reach of \citet{agarwal2024}, we run Gibbs-TS and ETC-TS at $n \in \{20, 40\}$ under a count-based NIA reward function with shared parameters:
\[
r_i(\mbf{Z}; \mbf{A}) = \mu Z_i + \sum_{k=1}^{d_{\max}} \gamma_k \mbf{1}\{d^1_i = k\} + \varepsilon_i.
\]
We sample $\mu \sim \text{N}(1, 0.2)$ and $\gamma_k \sim \text{N}(k, 0.5)$ independently per replication. At $n = 20$ we set $T = 2{,}000$ and $B = 6$; at $n = 40$ we set $T = 5{,}000$ and $B = 13$. 

\begin{table}[htbp]
\caption{Median cumulative regret (IQR) and network recovery accuracy under count-based NIA at $n=20$ ($T=2{,}000$, 30 reps) and $n=40$ ($T=5{,}000$, 10 reps).}
\label{tab:scaling_sania}
\centering
\small
\begin{tabular}{@{}llcc@{}}
\toprule
 & & $n=20$ & $n=40$ \\
\midrule
\multirow{3}{*}{Median regret (IQR)}
 & Known-A TS & 57 [40, 67] & 56 [50, 59] \\
 & Gibbs-TS & 351 [302, 425] & 1,441 [1,050, 3,205] \\
 & ETC-TS & 28,637 [23,951, 32,545] & 131,748 [115,658, 143,938] \\
\midrule
\multirow{2}{*}{Network acc.}
 & Gibbs-TS & 1.000 [0.995, 1.000] & 0.999 [0.996, 1.000] \\
 & ETC-TS & 0.995 [0.991, 1.000] & 0.999 [0.995, 1.000] \\
\bottomrule
\end{tabular}
\end{table}

Table~\ref{tab:scaling_sania} reports median cumulative regret and network recovery accuracy. Both methods achieve network recovery accuracy above $0.99$. Gibbs-TS lies within $6\times$ of the Known-$\mbf{A}$ oracle at $n=20$ and $26\times$ at $n=40$, while ETC-TS exceeds the oracle by roughly $500\times$ and $2{,}300\times$ respectively. Both methods recover the network with median accuracy above $0.99$ at both scales, isolating the residual regret gap to parameter learning rather than network discovery.

\subsection{Scaling to Large Networks}
\label{app:linmeans_scaling}

Under stronger assumptions on the reward function, the algorithm scales to networks of a thousand units. Adopting the linear-in-means parameterization standard in the peer effects literature \citep{manski_ident, BRAMOULLE200941}, node $i$'s reward takes the form
\[
r_i(\mbf{Z}; \mbf{A}) = \mu Z_i + \beta \bar{Z}_{\mathcal{N}_i} + \varepsilon_i,
\]
where $\bar{Z}_{\mathcal{N}_i}$ denotes the fraction of $i$'s neighbors that are treated. This reduces the parameter count to two per node and transforms the treatment selection step from an integer linear program to selection of the top-$B$ nodes by expected marginal reward, eliminating the ILP solver entirely.

We evaluate Gibbs-TS at $n \in \{50, 100, 250, 500, 1000\}$, with $T = 40{,}000$, $\sigma = 1.0$, and $B = \lceil n/5 \rceil$. Parameters are drawn as $\mu \sim \text{N}(2,1)$ and $ \beta \sim \text{N}(1,0.5)$.

Figure~\ref{fig:linmeans} shows cumulative regret over time at each network size. Regret grows sublinearly in $t$ at every scale, and the shape is consistent across $n$, indicating that learning remains effective as the network grows.

\begin{figure}[htbp]
\centering
\includegraphics[width=0.98\textwidth]{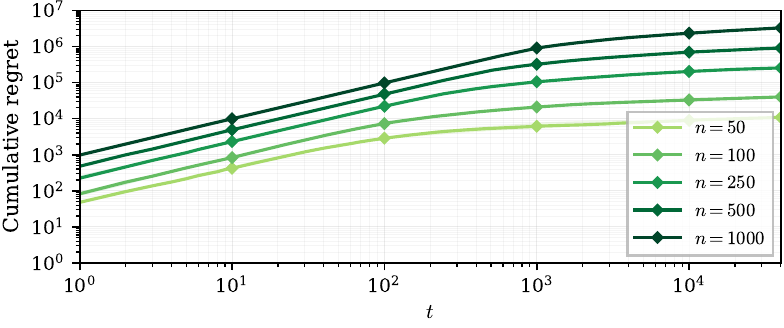}
\caption{Cumulative regret of Gibbs-TS under the linear-in-means reward model at network sizes $n \in \{50, 100, 250, 500, 1000\}$ over $T = 40{,}000$ rounds. Shaded bands denote 25th--75th percentile across 10 replications for each $n$.}
\label{fig:linmeans}
\end{figure}

\subsection{Tuning the ETC Isolation Budget $m$}
\label{app:etc_m_sweep}

The ETC variant's exploration phase length $m$ controls a bias-variance tradeoff: small $m$ produces a noisy graph estimate that may misclassify edges, while large $m$ pays a fixed exploration cost in regret regardless of subsequent learning. To assess whether ETC's regret in our main experiments reflects an under-tuned $m$, we sweep $m \in \{2, 5, 10, 15, 20, 30, 40, 60\}$ on the linear-in-means experiment of Section~\ref{sec:exp_real} ($T=10{,}000$, $\sigma=1.0$, 10 replications per cell with matched seeds across $m$). Table~\ref{tab:etc_m_sweep} reports median final regret and graph recovery for each $m$ on both real network topologies.

The optimal $m$ differs across the two networks: $m^*=10$ on village10 (median regret $64{,}365$) and $m^*=2$ on schid3 (median regret $78{,}513$). At every $m$ in the grid, ETC's regret remains an order of magnitude above the Gibbs-TS regret reported in Section~\ref{sec:exp_real} for the same networks. The schid3 curve is monotonically increasing in $m$, indicating that on the larger, sparser network the cost of dedicated exploration outweighs the benefit even at minimal isolation budgets. Network accuracy varies little across $m$ because the metric is dominated by correctly-classified non-edges in sparse graphs; the number of true edges identified grows with $m$, but so does Phase~1 regret.

\begin{table}[h]
\centering
\small
\caption{ETC tuning sweep over isolation budget $m$ on the linear-in-means real-network experiments ($T=10{,}000$, 10 matched seeds per cell). Median final regret with IQR. The optimal $m$ is network-dependent and ETC's regret remains well above Gibbs-TS at every grid point.}
\label{tab:etc_m_sweep}
\begin{tabular}{@{}lcccccccc@{}}
\toprule
$m$              & 2       & 5       & 10      & 15      & 20      & 30      & 40      & 60      \\
\midrule
\multicolumn{9}{l}{\emph{village10} ($n=63$, $|E|=114$, optimum at $m^*=10$)} \\
Median regret    & 73{,}206 & 70{,}536 & \textbf{64{,}365} & 75{,}408 & 69{,}455 & 76{,}557 & 85{,}610 & 109{,}864 \\
Edges recovered  & 4.5     & 6.0     & 8.0     & 10.0    & 11.5    & 16.5    & 18.5    & 22.5    \\
\midrule
\multicolumn{9}{l}{\emph{schid3} ($n=121$, $|E|=251$, optimum at $m^*=2$)} \\
Median regret    & \textbf{78{,}513} & 94{,}762 & 108{,}953 & 129{,}784 & 139{,}516 & 180{,}796 & 220{,}341 & 302{,}053 \\
Edges recovered  & 22.5    & 19.5    & 22.0    & 22.5    & 28.0    & 30.5    & 41.0    & 52.0    \\
\bottomrule
\end{tabular}
\end{table}

\subsection{Sweep-Count Ablation}\label{app:ablation}

Theorem~\ref{thm:modular_collapse} bounds the Bayesian regret of exact
posterior sampling, while Algorithm~\ref{alg:gibbs_ts} draws approximate
samples via $K$ Gibbs sweeps per round. We provide two empirical
diagnostics supporting $K=10$ as the paper default: a within-round
diagnostic on summary statistics of the chain state, and a sweep-count
ablation on regret.

\paragraph{Within-round chain behavior.}
At a single problem instance ($n=8$, count-based NIA reward,
Erd\H{o}s--R\'enyi graph with edge probability $0.3$), we run an extended
Gibbs chain of $K_{\text{diag}} = 200$ sweeps starting from the warm-start
state at snapshot rounds $t \in \{10, 50, 200, 1000\}$, across 75
independent seeds. Table~\ref{tab:mixing_drift} reports the drift between
the value at $k=10$ and the long-run average over $k > 100$, normalized by
cross-seed standard error. Two summary statistics of the chain state ---
squared Frobenius distance from the true adjacency and joint log-likelihood --- stabilize within the first 10
sweeps: at every snapshot round, the value at $k=10$ lies within one
cross-seed standard error of the plateau. Edge-flip activity in the second
half of each chain remains nonzero (0.19--0.53 flips per sweep depending
on $t$), with motion contracting as the posterior sharpens, confirming the
chain is actively moving rather than stuck.

\begin{table}[h]
\centering
\small
\caption{Drift between $K=10$ and the post-mixing plateau ($k > 100$),
expressed in cross-seed standard-error units, 75 seeds. Values $\leq 1$
indicate the $K=10$ sample is within sampling error of the plateau.}
\label{tab:mixing_drift}
\begin{tabular}{@{}lcccc@{}}
\toprule
Metric & $t=10$ & $t=50$ & $t=200$ & $t=1000$ \\
\midrule
$\|\mathbf{A}^{(k)} - \mathbf{A}^*\|_F^2$ & $0.37$ & $0.94$ & $0.08$ & $0.35$ \\
Log-likelihood                            & $0.10$ & $0.27$ & $0.23$ & $0.00$ \\
\bottomrule
\end{tabular}
\end{table}

\paragraph{Sweep-count ablation on regret.}
We run Gibbs-TS at $n=8$ on the linear-in-means reward with
$K \in \{1, 3, 5, 10, 20, 50\}$, $T = 2{,}000$, $B = 3$, $\sigma = 0.5$,
across 75 replications using a matched-seed protocol that shares an RNG
seed across $K$ values within each replication.

\begin{table}[h]
\centering
\small
\caption{Final cumulative regret at $T=2{,}000$ under the linear-in-means
reward, 75 replications, matched-seed protocol. Bootstrap 95\% CIs on the
mean reported in brackets; 10\%-trimmed mean reported as a robust
location summary.}
\label{tab:K_ablation}
\begin{tabular}{@{}lccccc@{}}
\toprule
$K$ & Median [IQR] & Mean (95\% CI) & Trim.\ mean & F1 (median) & Wall (ms) \\
\midrule
1   & 112 [80, 237]  & 205 [157, 264] & 157 & 0.84 & 1.9  \\
3   & 106 [77, 149]  & 134 [114, 155] & 116 & 0.88 & 2.4  \\
5   & 109 [74, 170]  & 178 [130, 239] & 124 & 0.87 & 3.4  \\
10  & 103 [70, 128]  & 150 [108, 212] & 104 & 0.88 & 5.9  \\
20  & 101 [73, 159]  & 172 [130, 226] & 123 & 0.86 & 10.9 \\
50  & 98  [70, 124]  & 118 [99, 140]  & 99  & 0.88 & 25.5 \\
\bottomrule
\end{tabular}
\end{table}

Median final regret varies across $K$ within a narrow band (98--112) that
is small relative to within-$K$ dispersion (IQRs of 50--160), and F1 is
essentially flat across $K$ (0.84--0.88). Paired comparisons against
$K=10$ are statistically indistinguishable from zero at every $K$: the
largest paired mean difference is $-32.2 \pm 28.9$ at $K=50$, comparable in magnitude to one standard error, and the fraction of replications in which a given
$K$ beats $K=10$ ranges from $30/75$ at $K=1$ to $42/75$ at $K=50$. The
regret distribution at all values of $K$ exhibits heavy right tails that
inflate the cross-rep mean and its standard error; the 10\%-trimmed mean,
robust to these tails, lies between 99 and 157 across $K$. Per-round
wall-clock cost grows linearly in $K$. We use $K=10$ as the paper default;
the ablation indicates that performance is not sensitive to this choice
within the range tested.

\section{Experiment Reproducibility Tables}

\subsection{Head-to-Head at $n=8$ (pairwise NIA, small and large $\xi$)}
\label{app:proto_head_to_head}

\begin{table}[H]
\centering
\small
\caption{Protocol for the Section~\ref{sec:exp_head_to_head} head-to-head
comparison. Both regimes share all settings except the $\xi$ block range.}
\begin{tabular}{@{}ll@{}}
\toprule
Setting & Value \\
\midrule
Network family & Erd\H{o}s--R\'enyi, $n = 8$, $p_{\text{edge}} = 0.3$, fresh per rep \\
Reward function & Pairwise NIA: $r_i = \mu_i Z_i + \sum_{j \in \mathcal{N}_i} \gamma_{ij} Z_j + \sum_{j<k \in \mathcal{N}_i} \xi_{ijk} Z_j Z_k + \varepsilon_i$ \\
$\mu_i$ distribution & $\mathrm{Uniform}[0.5, 1.5]$ iid per rep \\
$\gamma_{ij}$ distribution & $\mathrm{Uniform}[0.3, 1.0]$ iid per rep \\
$\xi_{ijk}$ distribution (small $\xi$) & $\mathrm{Uniform}[-0.4, 0.4]$ iid per rep \\
$\xi_{ijk}$ distribution (large $\xi$) & $\mathrm{Uniform}[-3.0, 3.0]$ iid per rep \\
Horizon $T$ & $2{,}000$ \\
Budget $B$ & $3$ \\
Noise $\sigma$ & $0.5$ \\
Replications & $30$ (matched seeds; $\text{seed}_0 = 1000$) \\
Algorithms & Gibbs-TS (well-spec.), Gibbs-TS (additive misspec., drops $\xi$), \\
 & ETC-TS (well-spec.), \citet{agarwal2024} \\
Gibbs sweeps $K$ & $10$  \\
Edge prior $\rho$ & $0.3$ (= $p_{\text{edge}}$) \\
Prior mean $\boldsymbol{\mu}_0$ & $\mathbf{0}$ \\
Prior covariance $\boldsymbol{\Sigma}_0$ & $10 \cdot \mathbf{I}$ \\
Phase-2 ridge for ETC ($1/\lambda$) & $\lambda = 0.1$, equivalent to $\boldsymbol{\Sigma}_0 = 10 \cdot \mathbf{I}$ \\
ETC isolation budget $m$ & $20$ rounds per node \\
ETC edge threshold & $\delta_\gamma / 2$ with $\delta_\gamma = 0.3$ \\
Agarwal et al. regularization & Default Lasso, CV-selected (their published codebase) \\
Treatment-selection optimizer & Brute-force exact enumeration over $\binom{8}{3}$ subsets \\
\bottomrule
\end{tabular}
\end{table}

\subsection{Real-Network Experiments (village~10 and SCHID~3, linear-in-means)}
\label{app:proto_real}

\begin{table}[H]
\centering
\small
\caption{Protocol for the Section~\ref{sec:exp_real} real-network experiments.
Both networks share the reward parameterization, sampler hyperparameters, and
horizon; only the network and $B$ differ.}
\begin{tabular}{@{}ll@{}}
\toprule
Setting & Value \\
\midrule
Network 1 (village~10) & \citet{banerjee_diffusion2013} borrowing-money household \\
 & $n = 63$, $|E| = 114$, density $0.058$, isolates dropped \\
Network 2 (SCHID~3) & \citet{paluck_shepherd_aranow2016} friendship roster, symmetrized \\
 & $n = 121$, $|E| = 251$, density $0.035$, isolates dropped \\
Reward function & Per-node linear-in-means: $r_i = \mu_i Z_i + \beta_i \bar{Z}_{\mathcal{N}_i} + \varepsilon_i$ \\
$\mu_i, \beta_i$ distribution & $\mathrm{Uniform}[0,1]$ iid per node, fresh per rep \\
Horizon $T$ & $10{,}000$ \\
Budget $B$ & $\lceil n/5 \rceil$: $13$ (village), $25$ (SCHID~3) \\
Noise $\sigma$ & $1.0$ \\
Replications & $30$ per network \\
Algorithms & Gibbs-TS (linear-means variant); Known-$\mathbf{A}$ TS oracle \\
Gibbs sweeps $K$ & $10$ \\
Edge prior $\rho$ & $1/n$ (= $0.0159$ village, $0.00826$ SCHID~3) \\
Prior mean $\boldsymbol{\mu}_0$ & $\mathbf{0}$ (per-node $(\mu_i, \beta_i)$) \\
Prior covariance $\boldsymbol{\Sigma}_0$ & $10 \cdot \mathbf{I}$ (per-node $2 \times 2$ block) \\
Posterior decomposition & Per-node $2 \times 2$ block-diagonal $\theta$-sampler \\
Treatment selection & Top-$B$ by expected marginal reward (no ILP solver) \\
Graph-accuracy snapshots & Every 100 rounds \\
Per-rep checkpointing & SLURM array tasks (one rep per task) \\
\bottomrule
\end{tabular}
\end{table}

\subsection{Downstream Causal Estimation: $n=20$ SBM, count-based NIA}
\label{app:proto_downstream_n20}

\begin{table}[H]
\centering
\small
\caption{Protocol for the $n=20$ SBM downstream-causal panel
(Section~\ref{sec:exp_downstream}, Table~\ref{tab:downstream}, left). The
design phase reuses the appendix count-based NIA sweep at $n=20$
(Appendix~\ref{app:proto_sania_scaling}); only the inference-phase
configuration is reported here.}
\begin{tabular}{@{}ll@{}}
\toprule
Setting & Value \\
\midrule
Network family & SBM, $K = 2$ groups, $p_{\text{within}} = 0.25$, $p_{\text{between}} = 1/n = 0.05$ \\
Network size $n$ & $20$ \\
Reward (design \& inference) & Count-based NIA: $r_i = \mu Z_i + \sum_{k=1}^{d_{\max}} \gamma_k \mathbf{1}\{d^1_i = k\}$, $d_{\max} = 4$ \\
Parameter sampling & $\mu \sim \mathcal{N}(1, 0.2)$; $\gamma_k \sim \mathcal{N}(k, 0.5)$ iid per rep \\
Design-phase horizon $T$ & $2{,}000$ \\
Design-phase budget $B$ & $6$ ($= \lfloor n/3 \rfloor$) \\
Inference-phase horizon $T_{\text{eval}}$ & $2{,}000$ \\
Inference-phase design & $Z_{t,i} \overset{\text{iid}}{\sim} \mathrm{Bernoulli}(B/n) = \mathrm{Bernoulli}(0.30)$ \\
Noise $\sigma$ & $1.0$ \\
Replications & $15$ \\
Gibbs sweeps $K$ & $10$ \\
Edge prior $\rho$ & $1/n = 0.05$ \\
Prior mean $\boldsymbol{\mu}_0$ & $\mathbf{0}$ \\
Prior covariance $\boldsymbol{\Sigma}_0$ & $10 \cdot \mathbf{I}$ \\
Warmup rounds $T_{\text{warm}}$ & $20$ (random-exploration warmup) \\
Graph estimate $\hat{\mathbf{A}}$ & Marginal posterior mean at $t = T$, thresholded at $0.5$ \\
Estimators & (i) Gibbs posterior mean of $\boldsymbol{\theta}$ from adaptive design $\mid \hat{\mathbf{A}}$; \\
 & (ii) OLS on inference phase $\mid \hat{\mathbf{A}}$ (ridge fallback if $X^\top X$ ill-conditioned, $\lambda = 0.01$); \\
 & (iii) OLS on inference phase $\mid \mathbf{A}$ (oracle) \\
Treatment-selection optimizer (design) & Gurobi ILP \\
\bottomrule
\end{tabular}
\end{table}

\subsection{Downstream Causal Estimation: village~10, linear-in-means}
\label{app:proto_downstream_village}

\begin{table}[H]
\centering
\small
\caption{Protocol for the village~10 downstream-causal panel
(Section~\ref{sec:exp_downstream}, Table~\ref{tab:downstream}, right).
The design phase is the village run from Section~\ref{sec:exp_real};
only the inference-phase configuration is reported here.}
\begin{tabular}{@{}ll@{}}
\toprule
Setting & Value \\
\midrule
Network & village~10 borrowmoney\_HH (\citealp{banerjee_diffusion2013}), \\
 & $n = 63$, $|E| = 114$, density $0.058$, isolates dropped \\
Reward (design \& inference) & Per-node linear-in-means: $r_i = \mu_i Z_i + \beta_i \bar{Z}_{\mathcal{N}_i} + \varepsilon_i$ \\
$\mu_i, \beta_i$ distribution & $\mathrm{Uniform}[0,1]$ iid per node, fresh per rep \\
Design-phase horizon $T$ & $10{,}000$ \\
Design-phase budget $B$ & $13$ ($= \lceil n/5 \rceil$, default in the design driver) \\
Inference-phase horizon $T_{\text{eval}}$ & $10{,}000$ \\
Inference-phase budget $B$ & $21$ ($= \lfloor n/3 \rfloor$) \\
Inference-phase design & $Z_{t,i} \overset{\text{iid}}{\sim} \mathrm{Bernoulli}(B/n)$, with $B$ as above \\
Noise $\sigma$ & $1.0$ \\
Replications & $10$ \\
Gibbs sweeps $K$ & $10$ \\
Edge prior $\rho$ & $1/n = 0.0159$ \\
Prior mean $\boldsymbol{\mu}_0$ & $\mathbf{0}$ \\
Prior covariance $\boldsymbol{\Sigma}_0$ & $10 \cdot \mathbf{I}$ (per-node $2 \times 2$ block) \\
Posterior decomposition (design) & Per-node $2 \times 2$ block-diagonal $\theta$-sampler \\
Graph estimate $\hat{\mathbf{A}}$ & Marginal posterior mean at $t = T$, thresholded at $0.5$ \\
Estimators & (i) Per-node Gibbs posterior mean of $(\mu_i, \beta_i)$ $\mid \hat{\mathbf{A}}$; \\
 & (ii) Per-node OLS on inference phase $\mid \hat{\mathbf{A}}$ (ridge $\lambda = 0.01$ fallback); \\
 & (iii) Per-node OLS on inference phase $\mid \mathbf{A}$ (oracle) \\
Treatment selection (design) & Top-$B$ by expected marginal reward \\
\bottomrule
\end{tabular}
\end{table}

\subsection{NIA Spec~A and Spec~B at $n=8$}
\label{app:proto_nia_specs}

\begin{table}[H]
\centering
\small
\caption{Protocol for the Appendix~\ref{app:nia_specs} comparisons. Spec~A
(saturation) and Spec~B (treatment--spillover interaction) share all settings
except the reward form and the $\lambda_i$ block in Spec~B.}
\begin{tabular}{@{}ll@{}}
\toprule
Setting & Value \\
\midrule
Network family & Erd\H{o}s--R\'enyi, $n = 8$, $p_{\text{edge}} = 0.3$, fresh per rep \\
Reward (Spec~A, saturation) & $r_i = \mu_i Z_i \mathbf{1}\{d^1_i = 0\} + \sum_{j \in \mathcal{N}_i} \gamma_{ij} Z_j + \varepsilon_i$ \\
Reward (Spec~B, interaction) & $r_i = \mu_i Z_i + \sum_{j \in \mathcal{N}_i} \gamma_{ij} Z_j + \lambda_i Z_i d^1_i + \varepsilon_i$ \\
$\mu_i$ distribution & $\mathrm{Uniform}[0.5, 1.5]$ iid per rep \\
$\gamma_{ij}$ distribution & $\mathrm{Uniform}[0.3, 1.0]$ iid per rep \\
$\lambda_i$ distribution (Spec~B only) & $\mathrm{Uniform}[-0.3, 0.3]$ iid per rep \\
Horizon $T$ & $2{,}000$ \\
Budget $B$ & $3$ \\
Noise $\sigma$ & $0.5$ \\
Replications & $30$ ($\text{seed}_0 = 1000$) \\
Algorithms & Gibbs-TS, ETC-TS, \citet{agarwal2024} \\
Gibbs sweeps $K = 10$ \\
Edge prior $\rho$ & $0.3$ \\
Prior mean $\boldsymbol{\mu}_0$ & $\mathbf{0}$ \\
Prior covariance $\boldsymbol{\Sigma}_0$ & $10 \cdot \mathbf{I}$ \\
Phase-2 ridge for ETC ($1/\lambda$) & $\lambda = 0.1$, equivalent to $\boldsymbol{\Sigma}_0 = 10 \cdot \mathbf{I}$ \\
ETC isolation budget $m$ & $20$ rounds per node \\
ETC edge threshold & $\delta_\gamma / 2$ with $\delta_\gamma = 0.3$ \\
Agarwal et al. regularization & Default Lasso, CV-selected \\
Treatment-selection optimizer & Gurobi ILP \\
\bottomrule
\end{tabular}
\end{table}

\subsection{Robustness to Reward Model Misspecification}
\label{app:proto_misspec}

\begin{table}[H]
\centering
\small
\caption{Protocol for the Appendix~\ref{app:misspec} misspecification
experiment. Two data-generating processes (GP1 = Spec~A, GP2 = Spec~B);
Gibbs-TS and ETC-TS fit the misspecified count-based NIA model, while
\citet{agarwal2024} runs correctly specified.}
\begin{tabular}{@{}ll@{}}
\toprule
Setting & Value \\
\midrule
Network family & Erd\H{o}s--R\'enyi, $n = 8$, $p_{\text{edge}} = 0.3$, fresh per rep \\
Data-generating process GP1 & Spec~A (saturation) -- see Appendix~\ref{app:proto_nia_specs} \\
Data-generating process GP2 & Spec~B (treatment--spillover interaction) \\
Fit model (Gibbs-TS, ETC-TS) & Count-based NIA: $r_i = \mu_i Z_i + \sum_k \gamma_{i,k} \mathbf{1}\{d^1_i = k\}$, $d_{\max} = n - 1 = 7$ \\
Fit model (Agarwal) & Fourier basis over the full $2^n$ exposure space (correctly specified) \\
$\mu_i, \gamma_{ij}$ (DGP) & $\mathrm{Uniform}[0.5, 1.5]$ and $\mathrm{Uniform}[0.3, 1.0]$ \\
$\lambda_i$ (DGP, GP2 only) & $\mathrm{Uniform}[-0.3, 0.3]$ \\
Horizon $T$ & $2{,}000$ \\
Budget $B$ & $3$ \\
Noise $\sigma$ & $0.5$ \\
Replications & $30$ per DGP \\
Gibbs sweeps $K$ & $K_{\text{gibbs}} = 10$ \\
Edge prior $\rho$ & $0.3$ \\
Prior mean $\boldsymbol{\mu}_0$ & $\mathbf{0}$ \\
Prior covariance $\boldsymbol{\Sigma}_0$ & $10 \cdot \mathbf{I}$ \\
ETC isolation budget $m$ & $20$ rounds per node \\
ETC edge threshold & $\delta_\gamma / 2$ with $\delta_\gamma = 0.3$ \\
ETC Phase-2 ridge & $\lambda = 0.1$ ($\boldsymbol{\Sigma}_0 = 10 \cdot \mathbf{I}$) \\
Treatment-selection optimizer & Gurobi ILP \\
\bottomrule
\end{tabular}
\end{table}

\subsection{Sensitivity to the Edge Prior $\rho$}
\label{app:proto_rho}

\begin{table}[H]
\centering
\small
\caption{Protocol for the Appendix~\ref{app:rho_sensitivity} edge-prior
sensitivity sweep. Reuses the small-$\xi$ setting from
Appendix~\ref{app:proto_head_to_head} and varies only $\rho$.}
\begin{tabular}{@{}ll@{}}
\toprule
Setting & Value \\
\midrule
Network family & Erd\H{o}s--R\'enyi, $n = 8$, $p_{\text{edge}} = 0.3$, fresh per seed \\
Reward function & Pairwise NIA, small $\xi$ regime \\
$\mu_i$ distribution & $\mathrm{Uniform}[0.5, 1.5]$ \\
$\gamma_{ij}$ distribution & $\mathrm{Uniform}[0.3, 1.0]$ \\
$\xi_{ijk}$ distribution & $\mathrm{Uniform}[-0.4, 0.4]$ \\
Horizon $T$ & $2{,}000$ \\
Budget $B$ & $3$ \\
Noise $\sigma$ & $0.5$ \\
Algorithm & Gibbs-TS (well-specified fit) \\
Swept variable & Edge prior $\rho$ \\
Grid for $\rho$ & $\{0.05,\ 0.15,\ 0.30,\ 0.50,\ 0.70\}$ \\
Replications per $\rho$ & $15$ matched seeds (seed range $1000$--$1014$) \\
Matching protocol & $A_{\text{true}}$, $\theta_{\text{true}}$, oracle, and bandit RNGs depend only on \\
 & seed (not $\rho$); $\rho$ enters only the edge-prior log-prior in Gibbs sweeps \\
Gibbs sweeps $K$ & $10$ (flat) \\
Prior mean $\boldsymbol{\mu}_0$ & $\mathbf{0}$ \\
Prior covariance $\boldsymbol{\Sigma}_0$ & $10 \cdot \mathbf{I}$ \\
Post-hoc edge-marginal sweeps & $50$ extra Gibbs sweeps from $A^{(T)}$ for marginal $\hat P(A_{ij}=1 \mid \text{data})$ \\
Treatment-selection optimizer & Brute-force exact enumeration \\
\bottomrule
\end{tabular}
\end{table}

\subsection{Count-Based NIA at $n \in \{20, 40\}$}
\label{app:proto_sania_scaling}

\begin{table}[H]
\centering
\small
\caption{Protocol for the Appendix~\ref{sec:exp_sania_scaling} scaling
experiment. Both network sizes share the reward parameterization, sampler
hyperparameters, and SBM construction; only $n$, $T$, $B$, $T_{\text{warm}}$,
and replication count differ.}
\begin{tabular}{@{}ll@{}}
\toprule
Setting & Value \\
\midrule
Network family & SBM, $K = \max(2, n/10)$ groups; $p_{\text{within}} = 0.25$, $p_{\text{between}} = 1/n$ \\
SBM at $n = 20$ & $K = 2$, $p_{\text{between}} = 0.05$ \\
SBM at $n = 40$ & $K = 4$, $p_{\text{between}} = 0.025$ \\
Reward function & Count-based NIA with shared parameters: \\
 & $r_i = \mu Z_i + \sum_{k=1}^{d_{\max}} \gamma_k \mathbf{1}\{d^1_i = k\} + \varepsilon_i$, \quad $d_{\max} = 4$ \\
Parameter sampling& $\mu \sim \mathcal{N}(1.0,\ 0.2)$; $\gamma_k \sim \mathcal{N}(k,\ 0.5)$ iid per rep \\
Horizon $T$ & $2{,}000$ ($n=20$); \quad $5{,}000$ ($n=40$) \\
Budget $B$ & $6$ ($n=20$); \quad $13$ ($n=40$); both $\lfloor n/3 \rfloor$ in the runner \\
Noise $\sigma$ & $1.0$ \\
Replications & $30$ ($n=20$); \quad $10$ ($n=40$) \\
Algorithms & Gibbs-TS, ETC-TS, Known-$\mathbf{A}$ TS \\
Gibbs sweeps $K$ & $10$ (flat) \\
Edge prior $\rho$ & $1/n$ ($0.05$ at $n=20$, $0.025$ at $n=40$) \\
Prior mean $\boldsymbol{\mu}_0$ & $\mathbf{0}$ \\
Prior covariance $\boldsymbol{\Sigma}_0$ & $10 \cdot \mathbf{I}$ \\
Warmup rounds $T_{\text{warm}}$ & $20$ ($n=20$); \quad $40$ ($n=40$); random-exploration warmup \\
ETC isolation budget $m$ & $40$ rounds per node \\
ETC edge threshold & $\delta_\gamma / 2$ with $\delta_\gamma = 1.0$ \\
Graph-accuracy snapshots & Every $10$ rounds \\
Treatment-selection optimizer & Gurobi ILP \\
\bottomrule
\end{tabular}
\end{table}

\subsection{Linear-in-Means Scaling at $n \in \{50, 100, 250, 500, 1000\}$}
\label{app:proto_linmeans_scaling}

\begin{table}[H]
\centering
\small
\caption{Protocol for the Appendix~\ref{app:linmeans_scaling} scaling study.}
\begin{tabular}{@{}ll@{}}
\toprule
Setting & Value \\
\midrule
Network family & SBM, $K = n/10$ groups, $p_{\text{within}} = 0.30$, $p_{\text{between}} = 1/n$ \\
Network sizes $n$ & $\{50,\ 100,\ 250,\ 500,\ 1000\}$ \\
Reward function & Linear-in-means: $r_i = \mu_i Z_i + \beta_i \bar{Z}_{\mathcal{N}_i} + \varepsilon_i$ \\
Parameter sampling & Single $\mu, \beta$ shared across nodes per replication, with \\
 & $\mu \sim \mathcal{N}(2,\ 1)$, $\beta \sim \mathcal{N}(1,\ 0.5)$ \\
Horizon $T$ & $40{,}000$ \\
Budget $B$ & $n / 5$: $\{10, 20, 50, 100, 200\}$ \\
Noise $\sigma$ & $1.0$ \\
Prior variance $\tau^2$ on $(\mu, \beta)$ & $1.0$ (Linear-Means TS posterior) \\
Replications & $10$  \\
Algorithms & Linear-Means TS (Gibbs-TS variant); Known-$\mathbf{A}$ TS oracle \\
Warmup rounds $T_{\text{warm}}$ & $0$ \\
Treatment selection & Top-$B$ by expected marginal reward (no ILP solver) \\
Base seed & $42$; rep seed = $42 + 1000n + 10r$ \\
\bottomrule
\end{tabular}
\end{table}

\subsection{Tuning the ETC Isolation Budget $m$}
\label{app:proto_etc_m}

\begin{table}[H]
\centering
\small
\caption{Protocol for the Appendix~\ref{app:etc_m_sweep} ETC tuning sweep.
Reuses the village~10 / SCHID~3 linear-in-means configuration from
Appendix~\ref{app:proto_real}; ETC is the only algorithm.}
\begin{tabular}{@{}ll@{}}
\toprule
Setting & Value \\
\midrule
Networks & village~10 ($n=63$, $|E|=114$) and SCHID~3 ($n=121$, $|E|=251$) \\
Reward function & Per-node linear-in-means (same as Section~\ref{sec:exp_real}) \\
$\mu_i, \beta_i$ distribution & $\mathrm{Uniform}[0,1]$ iid per node, fresh per rep \\
Horizon $T$ & $10{,}000$ \\
Budget $B$ & $\lceil n/5 \rceil$: $13$ (village), $25$ (SCHID~3) \\
Noise $\sigma$ & $1.0$ \\
Algorithm & ETC-into-TS (linear-means variant) \\
Swept variable & Phase-1 isolation budget $m$ (rounds per node) \\
Grid for $m$ & $\{2,\ 5,\ 10,\ 15,\ 20,\ 30,\ 40,\ 60\}$ \\
Replications per cell & $10$, matched seeds across $m$ within each network \\
ETC edge threshold $\tau$ & $3 \sigma \sqrt{2/m}$ on $\bar r_{i,j} - \bar r_{i,i}$, max-symmetrized \\
Phase-2 algorithm & Thompson sampling under $\hat{\mathbf{A}}$ (Linear-Means TS) \\
Phase-2 prior $\boldsymbol{\Sigma}_0$ & $10 \cdot \mathbf{I}$ \\
Treatment selection (Phase 2) & Top-$B$ by expected marginal reward \\
\bottomrule
\end{tabular}
\end{table}

\subsection{Sweep-Count Ablation}
\label{app:proto_K_ablation}

\begin{table}[H]
\centering
\small
\caption{Protocol for the Appendix~\ref{app:ablation} sweep-count ablation
on regret. (The within-round mixing diagnostic in
Table~\ref{tab:mixing_drift} runs $K_{\text{diag}} = 200$ Gibbs sweeps from
the warm-start state at four snapshot rounds with $75$ seeds; otherwise the
problem instance matches the line below.)}
\begin{tabular}{@{}ll@{}}
\toprule
Setting & Value \\
\midrule
Network family & Erd\H{o}s--R\'enyi, $n = 8$, $p_{\text{edge}} = 0.3$, fresh per rep \\
Reward function & Per-node linear-in-means: $r_i = \mu_i Z_i + \beta_i \bar{Z}_{\mathcal{N}_i} + \varepsilon_i$ \\
$\mu_i, \beta_i$ distribution & $\mathrm{Uniform}[0,1]$ iid per node, fresh per rep \\
Horizon $T$ & $2{,}000$ \\
Budget $B$ & $3$ \\
Noise $\sigma$ & $0.5$ \\
Algorithm & Gibbs-TS (linear-means variant) \\
Swept variable & Gibbs sweep count $K$ \\
Grid for $K$ & $\{1,\ 3,\ 5,\ 10,\ 20,\ 50\}$ \\
Replications & $75$ total ($15$ at $\text{seed}_0 = 2000$ + $60$ at $\text{seed}_0 = 3000$) \\
Matching protocol & Within each rep, the same RNG seed is used across $K$ values; \\
 & cross-$K$ differences are driven only by the extra Gibbs sweeps \\
Edge prior $\rho$ & $1/n = 0.125$ \\
Prior mean $\boldsymbol{\mu}_0$ & $\mathbf{0}$ \\
Prior covariance $\boldsymbol{\Sigma}_0$ & $10 \cdot \mathbf{I}$ (per-node $2 \times 2$ block) \\
Treatment selection & Top-$B$ by expected marginal reward (no Gurobi) \\
F1 metric & Edge-level F1 of $\hat{\mathbf{A}}$ at $t = T$ vs.\ $\mathbf{A}$ on the upper triangle \\
\bottomrule
\end{tabular}
\end{table}

\section{Computing Resources}
Figures were produced on a high-performance computing cluster using 50-75 CPUs in parallel, each assigned 1GB of RAM. The CPUs used were Intel(R) Xeon(R) Gold 6336Y. Gurobi software was used for all simulations. The authors received an academic license for free by registering through the Gurobi website.

\end{document}